\lstdefinestyle{py}{
  language=Python,
  basicstyle=\ttfamily\small,
  numbers=left,
  numberstyle=\tiny,
  stepnumber=1,
  showstringspaces=false,
  breaklines=true,
  keywordstyle=\color{blue!70!black}\bfseries,
  commentstyle=\color{gray!70}\itshape,
  stringstyle=\color{teal!60!black}
}
\DeclareMathOperator*{\argmax}{\arg\max}
\theoremstyle{plain}
\newtheorem{theorem}{Theorem}[section]
\newtheorem{proposition}[theorem]{Proposition}
\theoremstyle{definition}
\newtheorem{definition}[theorem]{Definition}
\theoremstyle{remark}
\newtheorem{remark}[theorem]{Remark}
\begin{document}

%
\runningtitle{Matricial Free Loss Regularized Autoencoders}

%

\twocolumn[

\aistatstitle{Matricial Free Energy as a Gaussianizing Regularizer: Enhancing Autoencoders for Gaussian Code Generation}

\aistatsauthor{Rishi Sonthalia \And Raj Rao Nadakuditi}
\aistatsaddress{Department of Mathematics, Boston College \\ sonthal@bc.edu \And EECS, University of Michigan \\ rajnrao@umich.edu }]

\begin{abstract}
 We introduce a novel regularization scheme for autoencoders based on matricial free energy. Our approach defines a differentiable loss function in terms of the singular values of the code matrix (code dimension × batch size). From the standpoint of free probability and random matrix theory, this loss achieves its minimum when the singular value distribution of the code matrix coincides with that of an appropriately scaled random matrix with i.i.d. Gaussian entries. Empirical simulations demonstrate that minimizing the negative matricial free energy through standard stochastic gradient–based training yields Gaussian-like codes that generalize across training and test sets. Building on this foundation, we propose a matricial free energy maximizing autoencoder that reliably produces Gaussian codes and showcase its application to  underdetermined inverse problems.
\end{abstract}

\section{Introduction}

Autoencoders \citep{salakhutdinov2006reducing, vincent2009extracting} are foundational in unsupervised representation learning, providing a flexible means to compress and reconstruct high-dimensional data. However, the latent code space is often unstructured, resulting in representations that are erratic or hard to interpret. Gaussian latent representations are desirable for promoting regularity and interpretability.
%
Traditional methods for enforcing Gaussianity impose stringent architectural constraints, such as diffeomorphic flows with tractable Jacobians or score estimation techniques. Notably, normalizing flows and score-based approaches require the latent dimension to match the data dimension, which is suboptimal when data resides on a low-dimensional manifold embedded in high-dimensional space.

We propose a \emph{matricial Free Loss} regularizer that shapes the singular value distribution of the \emph{batch code matrix} from an encoder. This loss is a discrete adaptation of a variational principle from free probability \citep{hiai2006semicircle, edelman2005random}, yielding a differentiable, architecture-agnostic objective. Its minimizers align with the spectral statistics of i.i.d.\ Gaussian codes, without needing invertibility, tractable Jacobians, or restricting to specific architectures, while remaining compatible with stochastic gradient descent.

\textbf{Contributions.} The main contributions are:
\begin{itemize}[leftmargin=1.25em, nosep, itemsep = 2pt]
\item \textbf{Free Loss:} A discrete matricial free-energy objective (Eq.~\ref{eq:freeloss-normal}) derived on the Mar\v{c}enko--Pastur maximization principle that is  differentiable and easily integrated into any encoder.
\item \textbf{General Applicability:} Training encoders and autoencoders with this regularizer across state-of-the-art models (e.g., Transformers, Conformers, EfficientNet) without bijectivity or Jacobian requirements, handling variable-length inputs and modalities like audio, text, and images.
\item \textbf{Generalization:} Demonstration that the regularizer produces Gaussian-like codes on train and test data, evaluated via scalar, vectorial, and matricial metrics.
\item \textbf{Inverse Problems:} Leveraging i.i.d.\ standard normal codes for a quadratic latent prior, enabling a recovery objective that outperforms Tikhonov-regularized autoencoders.
\end{itemize}

\paragraph{Prior Work.}
Gaussianization transforms data to approximate a Gaussian distribution. We review key approaches below. Classical methods alternate marginal Gaussianization with linear orthonormal transforms to decorrelate dimensions \citep{chen2000gaussianization, laparra2011iterative}. Recent analyses provide non-asymptotic convergence rates using random rotations, showing rapid approximation to spherical Gaussians \citep{draxler2023convergence}. 

\textbf{Normalizing Flows.} These techniques learn invertible maps $F: \mathbb{R}^d \to \mathbb{R}^d$ to transform data distributions into $\mathcal{N}(0,I)$, trained via the maximum likelihood inspired loss:
\[
    \mathcal{L}_{\text{flow}} = \frac{1}{n} \sum_{i=1}^n \left( \frac{1}{2} \|F(x_i)\|_2^2 - \log |\det J_F(x_i)| \right),
\]
which is equivalent to minimizing the KL divergence. Tractable architectures include NICE \citep{dinh2014nice}, RealNVP \citep{dinh2016density}, Glow \citep{kingma2018glow}, MAF \citep{papamakarios2017masked}, and spline flows \citep{durkan2019neural}. Extensions like SurVAE flows add surjective layers for dimension reduction \citep{nielsen2020survae}, while free-form flows enable likelihood training by quickly estimating the Jacobian \citep{draxler2024freefrom}.
\emph{Our approach relaxes diffeomorphism requirements, allowing Gaussianization with any network via a spectral loss.}

\textbf{Score Matching.} For $y = F(x)$ with density $q_F$, this minimizes the Fisher divergence to $\mathcal{N}(0,I)$:
\[
    \mathcal{L}_{\mathrm{SM}}(F) = \mathbb{E}_{y \sim q_F} \| \nabla_y \log q_F(y) + y \|_2^2,
\]
avoiding normalization constants \citep{hyvarinen2005estimation}. Variants include denoising score matching \citep{pascal2011connection} and sliced versions \citep{song2019ssm}. They have close connections to diffusion models \citep{song2019generative, song2020sde}.

\textbf{Other Techniques.} Include Wasserstein autoencoders \citep{tolstikhin2017wasserstein, kolouri2018sliced} and noise-contrastive estimation \citep{gutmann2010noise, gutmann2012noise}.

\textbf{Random matrix theory in ML.}
Random matrix theory has been used to analyze the performance of linear regression \citep{Dobriban2015HighDimensionalAO, Hastie2019SurprisesIH, Derezinski2020ExactEF, xiao2022precise,li2024least, kausik2024double, sonthalia2023training,pmlr-v238-wang24l} and deep networks as in \citep{liao2025random,adlam2019random,li2019random,baskerville2022appearance, pennington2017geometry, granziol2022random}. RMT has also been used to study the implicit regularization phenomenon and analysze the SGD algorithm by \citep{martin2021implicit, paquette20244p3,granziol2022learning} and to study the spectra of Hessian as in \citep{liao2021hessian} and \citep{ben2025spectral}. 

As far as we know, the matricial free energy function proposed in Section \ref{sec: new loss function} has not been used in the context of deep learning. \citet{nadakuditi2023free} utilize the free entropy function, which is the first of the two terms that  appear in our Free Loss function in (\ref{eq:freeloss-normal}), to develop the matricial analog of independent component analysis that they call free component analysis. 

\section{A new Gaussianizing loss function} \label{sec: new loss function}

In this section, we introduce our new matricial loss function. We begin by introducing some pertinent from random matrix theory. 

\begin{definition}[Empirical Spectral Distribution (ESD)] 
Let $X$ be a symmetric or Hermitian matrix $d \times d$ matrix with eigenvalues $\lambda_1 \geq \lambda_2 \geq \ldots \lambda_d$. 

Then, the Empirical Spectral distribution (ESDS) of $X$  is defined as:
\begin{equation}\label{eq:esd}
    \mu_{X} := \dfrac{1}{d} \sum_{i=1}^d \delta_{\lambda_i}
\end{equation}
where $\delta_{\lambda_i}$ is the Dirac delta measure at $\lambda_i$.   
\end{definition}

\begin{definition}[Mar\v{c}enko-Pastur distribution] 
Let $0 < c \leq 1$ be a shape parameter . Then, the Mar\v{c}enko-Pastur distribution \citep{marenko1967distribution} with shape  parameter $c$ has density given by: 
\begin{equation}\label{eq:mandp}
    \mu^{\textsf{M-P}}_{c}(\lambda) = \dfrac{\sqrt{(a_+ - \lambda)(\lambda-a_-)}}{2\pi c\lambda} \mathbf{1}_{[a_-,a_+]}(\lambda) d\lambda 
\end{equation}
where $a_\pm = (1 \pm \sqrt{c})^2$ are the left and right endpoints, respectively, of the distribution's support  and $\mathbf{1}_{[a_-,a_+]}(\lambda)$ is the indicator functions on $[a_-,a_+]$.
\end{definition}

\begin{proposition}\label{prop:mandp limit}
    Let $G \in \mathbb{R}^{d \times b}$ be a $d \times b$ Gaussian matrix with i.i.d. zero mean, unit variance entries - in other words, $G_{ij} \sim \mathcal{N}(0,1)$.  

Let $X = GG^T/b$ be the $d \times d$ sample covariance matrix from the Gaussian random matrix $G$. Then, as $d, b(d) \to \infty$ with $d/b(d) \to c \in (0,1]$, we have that 
\[
    \mu_{X} \overset{a.s.}{\longrightarrow} \mu^{\textsf{M-P}}_c,
\]
where $\mu^{\textsf{M-P}}_c$ is the Mar\v{c}enko-Pastur distribution in (\ref{eq:mandp}) and $\overset{a.s.}{\longrightarrow}$ denotes almost sure convergence.
\end{proposition}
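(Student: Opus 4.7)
The plan is to prove this via the Stieltjes transform method, which is the most systematic approach and gives almost sure convergence directly. For a probability measure $\mu$ on $\mathbb{R}$, recall its Stieltjes transform $m_\mu(z) = \int (\lambda - z)^{-1} d\mu(\lambda)$ for $z \in \mathbb{C}^+$. I would work with the empirical Stieltjes transform
\[
m_d(z) \;=\; \frac{1}{d}\,\mathrm{tr}\!\bigl((X - z I_d)^{-1}\bigr),
\]
and the known fact (Stieltjes continuity theorem) that if $m_d(z) \to m(z)$ almost surely for every $z$ in a set with a cluster point in $\mathbb{C}^+$, and $m$ is the Stieltjes transform of some probability measure $\mu$, then $\mu_X \Rightarrow \mu$ almost surely.

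First, I would establish convergence in expectation. Writing $X = \tfrac{1}{b}\sum_{j=1}^b g_j g_j^T$ where $g_j$ are the columns of $G$, I would apply the Sherman--Morrison / Schur complement identity to the $j$-th diagonal entry of the resolvent $(X - zI_d)^{-1}$. This expresses each diagonal entry in terms of a quadratic form $\tfrac{1}{b} g_j^T (X^{(j)} - zI)^{-1} g_j$, where $X^{(j)}$ removes the rank-one contribution of column $j$. The Hanson--Wright inequality (or direct Gaussian concentration) then shows this quadratic form concentrates around $\tfrac{1}{b}\mathrm{tr}((X^{(j)} - zI)^{-1})$, which, by a rank-one resolvent perturbation argument, is close to $\tfrac{d}{b} m_d(z)$. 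Passing to the limit $d/b \to c$ and taking expectations yields that any subsequential limit $m(z)$ of $\mathbb{E}[m_d(z)]$ satisfies the Marčenko--Pastur fixed-point equation
\[
m(z) \;=\; \frac{1}{1 - c - z - c z\, m(z)}.
\]
Solving this quadratic in $m$, together with the constraint $\mathrm{Im}\,m(z) > 0$ on $\mathbb{C}^+$, uniquely identifies $m(z)$ as the Stieltjes transform of $\mu^{\textsf{M-P}}_c$; applying the Stieltjes--Perron inversion formula recovers the density in (\ref{eq:mandp}).

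Next, I would upgrade to almost sure convergence via a martingale difference argument. Let $\mathcal{F}_k$ be the $\sigma$-algebra generated by $g_1, \dots, g_k$, and write
\[
m_d(z) - \mathbb{E}[m_d(z)] \;=\; \sum_{k=1}^b \bigl(\mathbb{E}[m_d(z)\mid \mathcal{F}_k] - \mathbb{E}[m_d(z)\mid \mathcal{F}_{k-1}]\bigr).
\]
Each summand can be bounded deterministically by $O(1/(d\,|\mathrm{Im}\,z|))$ using the rank-one resolvent perturbation inequality $|\mathrm{tr}(A^{-1}) - \mathrm{tr}(B^{-1})| \le 1/|\mathrm{Im}\,z|$ when $A - B$ has rank one. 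Azuma--Hoeffding then gives exponential concentration: $\mathbb{P}(|m_d(z) - \mathbb{E}[m_d(z)]| > \varepsilon) \le 2\exp(-c_z \varepsilon^2 d)$. Borel--Cantelli yields almost sure convergence at any fixed $z$, and restricting to a countable dense subset of $\mathbb{C}^+$ handles all $z$ simultaneously.

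The main technical obstacle is Step~1, specifically the bookkeeping needed to show that each diagonal resolvent entry concentrates around the same deterministic limit uniformly, and then carefully extracting the self-consistent equation from the Schur-complement expansion while controlling the $o(1)$ error terms as $d, b \to \infty$ with $d/b \to c$. The concentration (Step~2) and inversion (identification of $\mu^{\textsf{M-P}}_c$) are comparatively routine once the fixed-point equation is in hand.
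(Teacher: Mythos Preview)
Your proposal is correct and outlines the now-standard Stieltjes transform route to the Mar\v{c}enko--Pastur law: Schur complement / leave-one-out identity to derive the self-consistent equation for $m(z)$, Hanson--Wright to concentrate the quadratic forms, and a bounded-differences martingale (via the rank-one resolvent perturbation lemma) plus Borel--Cantelli to upgrade to almost sure convergence. The fixed-point equation you state is the right one, and the inversion correctly identifies the density in (\ref{eq:mandp}).

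The paper, by contrast, does not prove the proposition at all: its ``proof'' consists solely of citations to \citet{marenko1967distribution} for convergence in probability and \citet{silverstein1995empirical} for the almost sure upgrade and the extension to general i.i.d.\ entries with finite moments. Your sketch is essentially the argument in Silverstein's paper (and, in its modern form, the Bai--Silverstein textbook treatment), so you are filling in exactly what the cited references contain. What your write-up buys is self-containment; what the paper's approach buys is brevity, since this is a classical result that is peripheral to the paper's actual contribution. One minor remark: the martingale differences are complex-valued, so when you invoke Azuma--Hoeffding you should apply it to the real and imaginary parts separately (or note that the differences are deterministically bounded and appeal to a complex-valued variant).
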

\begin{proof}
This result was first established by 
\citet{marenko1967distribution} who proved convergence in probability.  \citet{silverstein1995empirical} established almost sure convergence and showed that the limiting distribution arises whenever $G_{ij}$ has zero mean, unit variance entries with bounded higher order moments.
\end{proof}

\textbf{Variational characterization of $\mu_{c}^{\textsf{M-P}}$.}

The Mar\v{c}enko-Pastur distribution is rescaled version of the Free Poisson distribution from free probability theory \citep{hiai2006semicircle, mingo2017free}.  It can be characterized as the solution to a variational problem as described next.

\begin{definition}[Voiculescu Free Entropy] Let $\mu$ be a probability measure on $\mathbb{R}$. The Voiculescu free entropy (\citep{voiculescu1997analogues}) is given by:
\begin{equation}\label{eq:chi_mu}
    \chi(\mu) = \int \log|\lambda-\tilde{\lambda}| d\mu(\lambda)d\mu(\tilde{\lambda}).
\end{equation}  
\end{definition}

\begin{proposition}[Maximization Principle for the Mar\v{c}enko-Pastur distribution]\label{prop:free entropy max}

Let $\mu$ be a probability measure on $\mathbb{R}$ and $c \in (0,1]$.  Consider the free entropy functional $\Phi_c(\mu)$ defined as:
\begin{equation}\label{eq:free entropy}
    \Phi_c(\mu) = \chi(\mu) - \int \left(\frac{\lambda}{c} - \left(\frac{1}{c}-1\right)\log(\lambda)\right)d\mu(\lambda)
\end{equation}
where $\chi(\mu)$ is the Voiculescu free entropy in (\ref{eq:chi_mu}). Then,
\begin{equation}
\mu^{\textsf{M-P}}_c  = \argmax_{\mu} \Phi_c(\mu)
\end{equation}
\end{proposition}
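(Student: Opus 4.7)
The plan is to treat the maximization of $\Phi_c$ as a constrained variational problem on probability measures supported in $(0, \infty)$ (the natural domain imposed by the $\log\lambda$ term), derive its Euler--Lagrange equation, verify that $\mu^{\textsf{M-P}}_c$ satisfies it, and use strict concavity of $\Phi_c$ to promote the critical point to the unique global maximizer.

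First I would establish strict concavity. The external-field integral $\int[\lambda/c - (1/c-1)\log\lambda]\, d\mu(\lambda)$ is linear in $\mu$, so concavity reduces to that of $\chi(\mu)$; this follows from the negative-definiteness of the logarithmic kernel on signed measures of zero total mass, a standard fact in logarithmic potential theory. Taking a variation $\mu + \epsilon\nu$ with $\nu$ compactly supported in $(0, \infty)$ and $\int d\nu = 0$, and setting $\tfrac{d}{d\epsilon}\Phi_c(\mu + \epsilon\nu)\big|_{\epsilon=0} = 0$, then gives the Euler--Lagrange condition
\begin{equation*}
2\int \log|\lambda - \tilde\lambda|\, d\mu(\tilde\lambda) \;-\; \frac{\lambda}{c} \;+\; \left(\frac{1}{c} - 1\right)\log\lambda \;=\; C
\end{equation*}
for $\lambda$ in the support of $\mu$, with some Lagrange multiplier $C \in \mathbb{R}$.

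Next I would verify this identity for $\mu = \mu^{\textsf{M-P}}_c$ on $[a_-, a_+]$. Differentiating in $\lambda$ reduces the check to
\begin{equation*}
2\,\mathrm{PV}\!\int \frac{d\mu^{\textsf{M-P}}_c(\tilde\lambda)}{\lambda - \tilde\lambda} \;=\; \frac{1}{c} - \frac{1-c}{c\lambda},
\end{equation*}
that is, to computing the real part of the boundary Stieltjes transform $G(\lambda + i0)$ of $\mu^{\textsf{M-P}}_c$. Using the quadratic relation $c\lambda\, G^2 - (\lambda + c - 1)\, G + 1 = 0$ (obtainable directly from the density, or from Silverstein's fixed-point equation for sample covariance matrices), the branch with $G(z) \sim 1/z$ at infinity yields $\operatorname{Re} G(\lambda + i0) = (\lambda + c - 1)/(2c\lambda)$ on $(a_-, a_+)$, which is exactly half the desired right-hand side. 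Integrating back in $\lambda$ recovers the Euler--Lagrange equation up to the constant $C$, and strict concavity then forces $\mu^{\textsf{M-P}}_c$ to be the unique global maximizer.

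The hard part will be the verification step: one must track signs and branches of the square root when evaluating $G(\lambda + i0)$, and check that endpoint behavior at $a_\pm$ (and at $\lambda = 0$ in the degenerate case $c = 1$, where the $\log$ coefficient vanishes and the argument must be adapted so the Euler--Lagrange equation is interpreted in the limiting sense) does not produce extra boundary contributions. Once those care points are handled, the remainder of the proof is standard calculus of variations combined with the concavity observation.
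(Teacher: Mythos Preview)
Your approach is correct in outline but takes a genuinely different route from the paper. The paper does not argue variationally at all: it invokes the known maximization principle for the \emph{free Poisson} law $\mu^{\textsf{F-P}}_\theta$ (Hiai--Petz, Theorem 5.5.7), namely that $\mu^{\textsf{F-P}}_\theta$ uniquely maximizes $\Psi_\theta(\mu) = \chi(\mu) - \int(\xi - (\theta-1)\log\xi)\,d\mu(\xi)$, and then observes that under the pushforward by $T(\xi) = \xi/\theta$ with $\theta = 1/c$ one has $\Phi_c(\nu) = \Psi_\theta(\mu) - \theta\log\theta$. Since the additive constant is irrelevant and the pushforward of $\mu^{\textsf{F-P}}_\theta$ under $T$ is exactly $\mu^{\textsf{M-P}}_c$, the result follows in two lines. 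Your direct Euler--Lagrange/Stieltjes-transform verification is more self-contained (no appeal to an external theorem) and makes the mechanism visible, at the cost of more computation; the paper's reduction is shorter but black-boxes the real work into the cited reference.

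One point to tighten in your argument: verifying the differentiated Euler--Lagrange identity on $[a_-,a_+]$ shows only that $2\int\log|\lambda-\tilde\lambda|\,d\mu^{\textsf{M-P}}_c(\tilde\lambda) - V(\lambda)$ is constant on the support. To promote this to ``$\mu^{\textsf{M-P}}_c$ is a critical point'' in the sense needed for the concavity argument, you also need the Frostman inequality $2\int\log|\lambda-\tilde\lambda|\,d\mu^{\textsf{M-P}}_c(\tilde\lambda) - V(\lambda) \le C$ for $\lambda \in (0,\infty)\setminus[a_-,a_+]$; otherwise a competing measure could place mass outside $[a_-,a_+]$ and your first-variation test does not rule it out. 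This is a routine check (the left side is concave in $\lambda$ on each component of the complement and matches $C$ at the endpoints with the right slope), but it should be stated explicitly rather than absorbed into ``standard calculus of variations.''
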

\begin{proof}
\cite{hiai2006semicircle}[Theorem 5.5.7, pp. 223] gives us the maximization principle for the Free Poison distribution. Using a change of variables, we get the principle for the Mar\v{c}enko-Pastur distribution. The full details can be found in Appendix~\ref{app:theory}.
\end{proof}

\textbf{The matricial free energy loss function.}
Consider a deep neural network $f_{\theta}: x \in \mathcal{X} \mapsto y \in \mathbb{R}^{d}$. Let $b$ be the batch size and let $\widetilde{Y}$ denote the $d \times b$ sized  batch-code matrix formed by passing as input to the network the inputs $x_1, \ldots, x_b$ organized as:
\begin{equation}
    \widetilde{Y} = \begin{bmatrix}
    f_{\theta}(x_{1}) & f_{\theta}(x_{2}) & \ldots & f_{\theta}(x_{b})
\end{bmatrix}, 
\end{equation}
where for $j = 1, \ldots, b$, $ f_{\theta}(x_{j})$ denotes the output of the network when applied to the input batch of data $\{x_{1}, \ldots x_b\}$. 
If we wish for the network to Gaussianize the inputs, then $\widetilde{Y}$ needs to close, in a spectral sense, to a Gaussian matrix with i.i.d. $\mathcal{N}(0,1)$ entries, for \emph{every} randomly selected batch of inputs.

From Proposition \ref{prop:mandp limit} and \ref{prop:free entropy max}, this implies that the eigenvalues of the matrix $\widetilde{Y} \widetilde{Y}^T/b$ should maximize the discrete analog of the functional in   (\ref{eq:free entropy}). Following this argument, suppose $\{\lambda_{i}\}_{i=1}^{d}$ are the eigenvalues  of $\widetilde{Z} = \widetilde{Y}\widetilde{Y}^\top \in \mathbb{R}^{d \times d}$. Then, the discrete analog of the functional in (\ref{eq:free entropy}) can be obtained by plugging in 
\begin{equation}
    \mu_{\widetilde{Z}} = \dfrac{1}{d} \sum_{i=1}^d \delta_{\lambda_i/d}
\end{equation}
into (\ref{eq:free entropy}). When $d < b$ and $c = d/b$ as in the assumptions for Proposition \ref{prop:mandp limit}, this yields the expression:
\begin{equation} \label{eq:freeloss}
    \begin{split}
        \widehat{\Phi}_c(\widetilde{Y})  &= \left( \frac{1}{d(d-1)}\sum_{i\neq j} \log|\lambda_i/b - \lambda_j/b| \right) \\
        &- \left( \frac{1}{d}\sum_{i=1}^d \left[ \frac{\lambda_i/b}{c} - \left(\frac{1}{c}-1\right)\log(\lambda_i/b) \right] \right) \\
        &= \left( \frac{1}{d(d-1)}\sum_{i\neq j} \log|\lambda_i - \lambda_j| \right) \\
        &- \left( \frac{1}{d}\sum_{i=1}^d \left[ \frac{\lambda_i}{d} - \left(\frac{1}{c}-1\right)\log(\lambda_i) \right] \right) - \frac{b}{d}\log(b),
    \end{split}
\end{equation}
where $\{\lambda_i\}_{i=1}^{d}$ are the  eigenvalues of $\widetilde{Y}\widetilde{Y}^T$. 

Inspecting (\ref{eq:freeloss}), reveals the presence of a constant term $b/d \log b$ on the right hand side that is independent of the $\lambda_i$'s that we are looking to shape or optimize. Eliminating the constant term and substituting $\lambda_i = \sigma_i^2$ where $\sigma_i$ is the $i$-th singular value of $\widetilde{Y}$ and $\lambda_i$ is the corresponding eigenvalue of  $\widetilde{Y}\widetilde{Y}^T$ yields the \emph{free matricial energy}:
\begin{equation} \label{eq:freeloss-normal}
\begin{aligned}
    \overline{\Phi}_c(\widetilde{Y}) 
    = &\frac{1}{d(d-1)} \sum_{i \neq j} \log |\sigma_i^2 - \sigma_j^2| \\
    &- \frac{1}{d} \sum_{i=1}^d \left[ \frac{\sigma_i^2}{d} 
    - \left( \frac{1}{c} - 1 \right) \log(\sigma_i^2) \right].
\end{aligned}
\end{equation}

When $\widetilde{Y}$ is an i.i.d. Gaussian random matrix then, via Proposition \ref{prop:free entropy max}, in the double asymptotic limit of large batch-code matrices we expect it to maximize the matricial free energy function $\overline{\Phi}_c(\widetilde{Y})$. Equivalently,  we  might conclude that $f_{\theta}$ is a Gaussianizing transform if: 
\begin{align}
\theta_{\sf Gaussianizing}
&= \arg\max_{\theta} \, \overline{\Phi}_c(\widetilde{Y}) \\
& = \arg\min_{\theta} \, -\overline{\Phi}_c(\widetilde{Y}), \\
&= \arg\min_{\theta} \, \mathcal{L}_{\text{free}}(\widetilde{Y}), \label{eq:free loss}
\end{align}
where $\mathcal{L}_{free}(\widetilde{Y}) = - \overline{\Phi}_c(\widetilde{Y})$ \textbf{is the newly proposed matricial Free Loss function} which we shall interchangeably refer to as the Free Loss function in what follows. Via the results in \cite{lewis2003mathematics,lewis2005nonsmooth,magnus2019matrix}, the Free Loss function is a differentiable function of the matrix argument when the matrix has distinct singular values. 

\textbf{Characteristics of the Free Loss function.} We note  that the free (energy) loss function $\mathcal{L}_{\textrm{free}}$ discourages the singular values of the batch-code matrix from coalescing or merging into each other via the $\log|\sigma_i^2-\sigma_j^2|$ repulsion term. This project originated from the idea of exploring whether the repulsion term baked into the free energy loss function could mitigate mode collapse while training autoencoders by spreading out the singular value of the batch-code matrix.
 
\textbf{Alternate ways of deriving the Free Loss function.} Note 
that the expression for $\overline{\Phi}_c(\widetilde{Y})$ in (\ref{eq:freeloss-normal}) can be alternately derived by taking the logarithm of  the joint probability distribution of the eigenvalues of a $d \times d$ Wishart random matrix $\widetilde{Y}\widetilde{Y}^T$ as derived simultaneously by \citep{fisher1939sampling,hsu1939distribution,roy1939p} and expressed in matching notation in  \citet{edelman2005random}[pp. 251, Eq. (4.5)]. Omitting the constant terms in the log-likelihood and then converting the resulting expression into a function of the singular values of $\widetilde{Y}$ yields (\ref{eq:freeloss-normal}). Maximizing the log-likelihood like expression for the singular value distribution of $\widetilde{Y}$ that emerges thus is equivalent to finding the maximum likelihood locations of the singular values of an i.i.d. Gaussian random matrix -  the optima are closely connected to the zeros  of the $d$-th degree generalized Laguerre polynomials  as described by \citet{dette2002strong}.  \citet{hiai2006semicircle}[Section 5.5] interpret the matricial free energy function via a large deviation rate function lens.

\begin{figure*}
    \centering
    \subfloat[Scatter plot of training dataset.\label{fig:gmm-data}]{
        \includegraphics[width=0.52\linewidth]{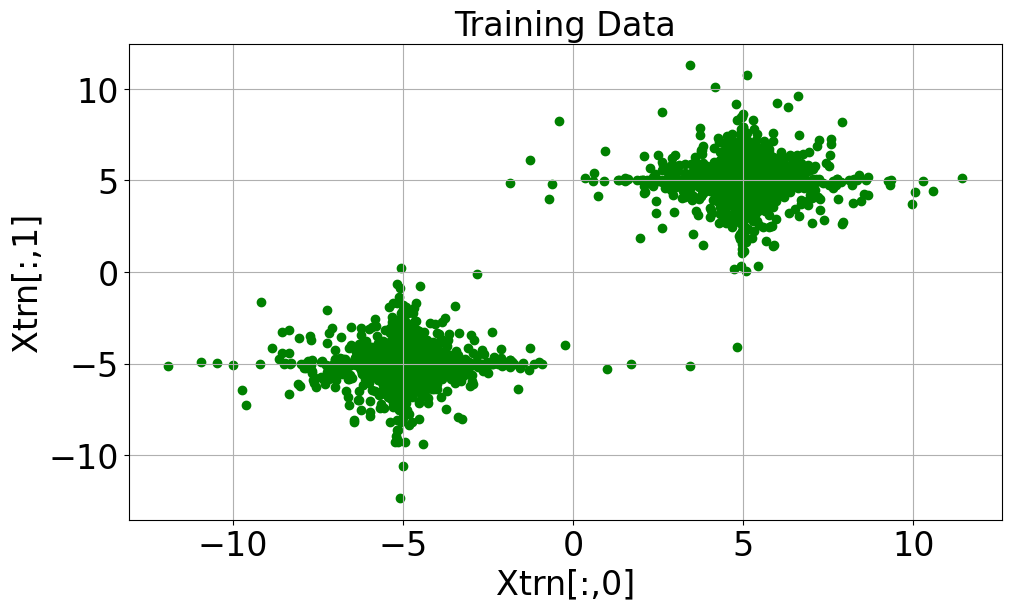}
    }
    \hfill
    \subfloat[Training and test loss curves\label{fig:loss}]{
        \includegraphics[width=0.43\linewidth]{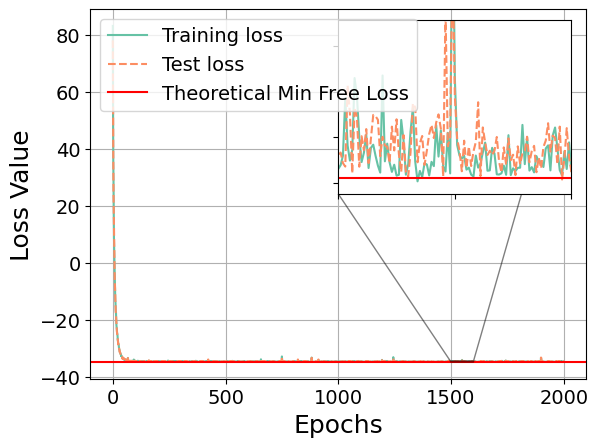}
    }
    \caption{(a) Scatter plot of the training dataset generated as described in Section \ref{sec:free encoder}. 
    (b) The training loss and test curves for a batch of size 256 data points embedded in 32-dimensional space. The blue line is the training loss for a random batch, the orange line is for a random test batch, and the red line is the theoretical minimum computed by sampling a matrix with i.i.d. $\mathcal{N}(0,1)$ entries.}
\end{figure*}

\begin{figure*}[!htbp]
\label{fig:evolve-encoder}
\centering
\begin{subfigure}{\linewidth}
\centering
\begin{subfigure}{0.32\linewidth}
\includegraphics[width=\linewidth]{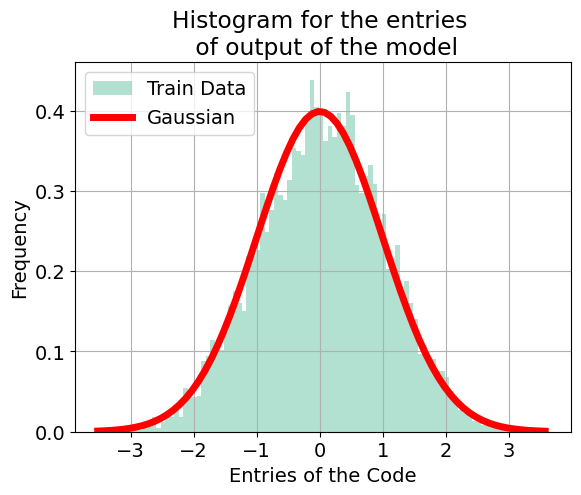}
\caption{Empirical pdf of the entries vs pdf of $\mathcal{N}(0,1)$.}
\label{fig:hist-2000-train}
\end{subfigure}
\begin{subfigure}{0.32\linewidth}
\includegraphics[width=\linewidth]{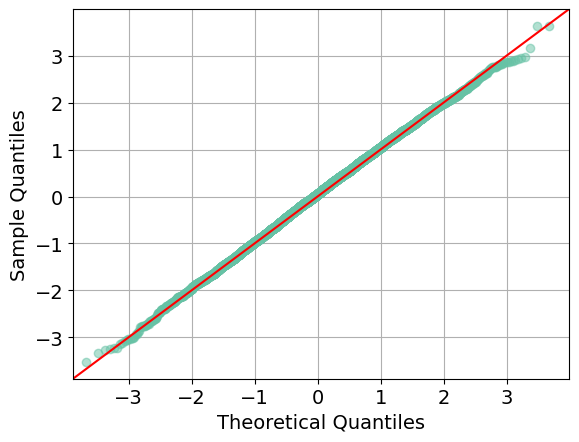}
\caption{Q-Q plot relative to a standard $\mathcal{N}(0,1)$ distribution.}
\label{fig:qq-2000-train}
\end{subfigure}
\begin{subfigure}{0.32\linewidth}
\includegraphics[width=\linewidth]{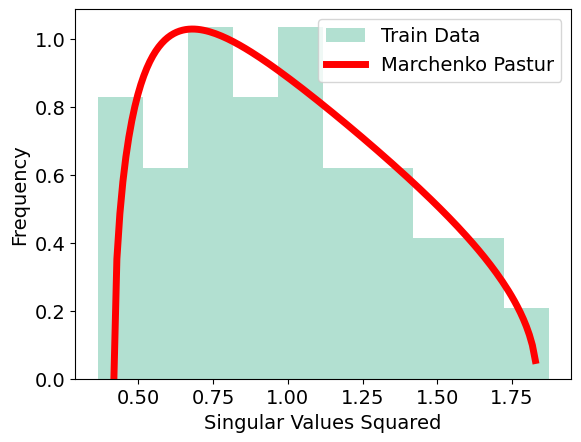}
\caption{Distribution of sample covariance matrix eigenvalues.}
\label{fig:svd-2000-train}
\end{subfigure}
\caption{Scalar and Matricial quantities for the code $\mathcal{E}_{\mathrm{opt}}(X_b^{\text{train}})$ for a batch of train data $X_b^{\text{train}}$.}
\label{fig:train-2000}
\end{subfigure}
\begin{subfigure}{\linewidth}
\centering
\begin{subfigure}{0.32\linewidth}
\includegraphics[width=\linewidth]{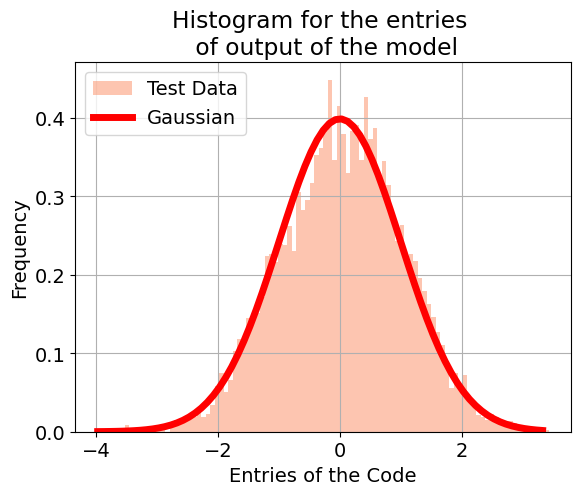}
\caption{Empirical pdf of the entries vs pdf of $\mathcal{N}(0,1)$.}
\label{fig:hist-2000-test}
\end{subfigure}
\begin{subfigure}{0.32\linewidth}
\includegraphics[width=\linewidth]{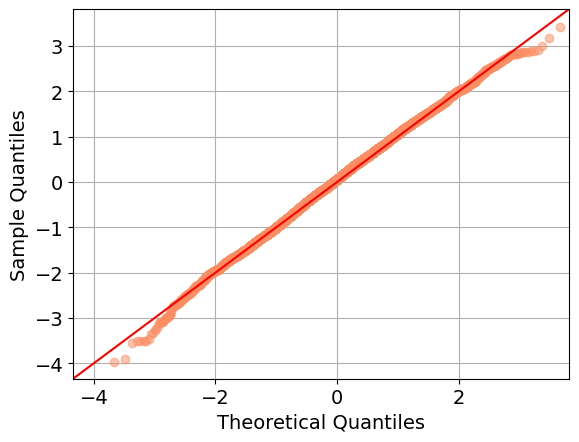}
\caption{Q-Q plot relative to a standard $\mathcal{N}(0,1)$ distribution.}
\label{fig:qq-2000-test}
\end{subfigure}
\begin{subfigure}{0.32\linewidth}
\includegraphics[width=\linewidth]{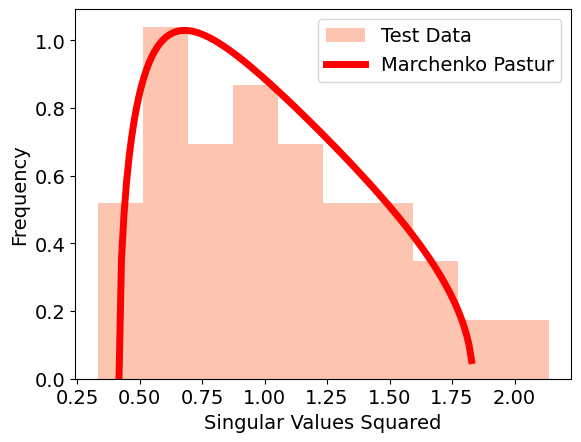}
\caption{Distribution of sample covariance matrix eigenvalues.}
\label{fig:svd-2000-test}
\end{subfigure}
\caption{Scalar and Matricial quantities for the code $\mathcal{E}_{\mathrm{opt}}(X_b^{\text{test}})$ for a batch of test data $X_b^{\text{test}}$.}
\label{fig:test-2000}
\end{subfigure}
\begin{subfigure}{\linewidth}
\centering
\begin{subfigure}{0.32\linewidth}
\includegraphics[width=\linewidth]{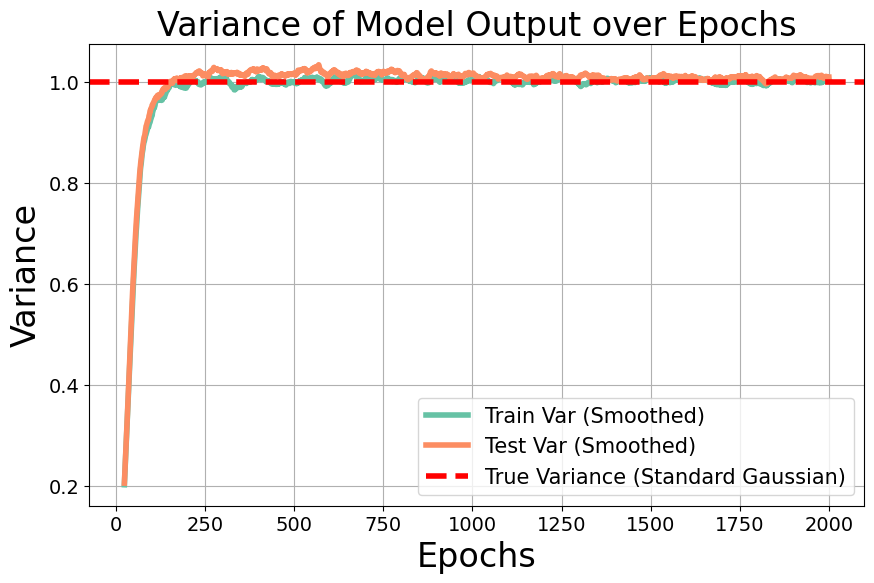}
\caption{Sample Variance.}
\label{fig:var}
\end{subfigure}
\begin{subfigure}{0.32\linewidth}
\includegraphics[width=\linewidth]{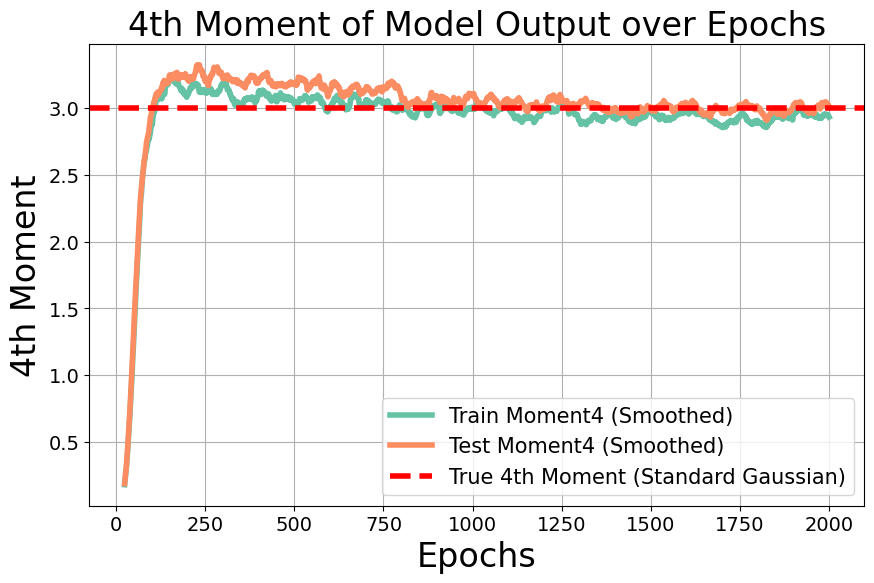}
\caption{Sample Fourth Moment.}
\label{fig:moment4}
\end{subfigure}
\begin{subfigure}{0.32\linewidth}
\includegraphics[width=\linewidth]{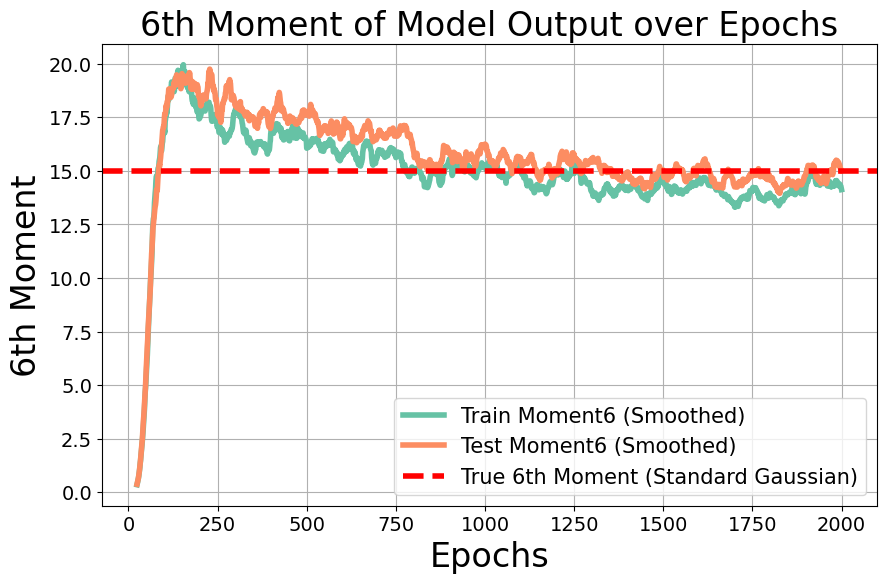}
\caption{Sample Sixth Moment.}
\label{fig:moment6}
\end{subfigure}
\caption{Moment Matching for entries of $\mathcal{E}_{\mathrm{opt}}(X_b^{\text{train}}), \mathcal{E}_{\mathrm{opt}}(X_b^{\text{test}})$.}
\label{fig:moments}
\end{subfigure}
\caption{Visualization of the outputs of a free Gaussianizing encoder trained as  described in Section \ref{sec:free encoder}. The top two rows show the histogram of the entries of entries of $\mathcal{E}_{\mathrm{opt}}(X_b)$, a Q-Q (or quantile-quantile) plot for the entries of $\mathcal{E}_{\mathrm{opt}}(X_b)$ which  compares the quantiles of the empirical data against the quantiles of a theoretical standard normal distribution, and the empirical eigenvalue disribution  of $(1/b),\mathcal{E}_{\mathrm{opt}}(X_b)\mathcal{E}_{\mathrm{opt}}(X_b)^\top$ for the  training and test dataset relative to the   Mar\v{c}enko-Pastur distribution in (\ref{eq:mandp}) with parameter $c = d/b = 32/256$. 
Row 3 shows the entrywise variance, fourth, and sixth moments (red line is true moments for $\mathcal{N}(0,1)$ target) of the training and test data as the free Gaussianizing encoder is trained.}
\label{fig:2}
\end{figure*}

\section{Free Gaussianizing Encoder}\label{sec:free encoder}
We first examine whether training an encoder to minimize Free Loss produces Gaussian codes.  Let $ \{\mathcal{E}_{\theta}: x \in X \mapsto \mathbb{R}^d\}$ denote an encoder. We train a free Gaussianizing encoder, using the Free Loss $\mathcal{L}_{\textrm{free}}$ as the loss function and mini-batched Adam as the optimizer. We note that the mini-batch used at each iteration is randomly selected. Throughout, we let $X_b$ be a mini-batch with $b$ data points.

\begin{table*}[ht]
\centering
\begin{tabular}{l|cccc|ccc|c}
\toprule
\textbf{Metric} & \multicolumn{4}{c}{Image} & \multicolumn{3}{c}{Audio} & Text \\
& MNIST & CIFAR10 & CelebA & Imagenet & GTZAN & ESC50 & Urbansound & IMDB \\
\midrule
KS & 0.0190 & 0.0219 & 0.0378 & 0.0174 & 0.0120 & 0.0255 & 0.0750 & 0.0235 \\
$\Delta_{\texttt{OT}}$ & 0.0168 & 0.0154 & 0.0278 & 0.0134 & 0.0172 & 0.0311 & 0.0242 & 0.0103 \\
Free Loss & 0.0024 & 0.0012 & 0.0012 & 0.0063 & 0.00009 & 0.0069 & 0.1007 & 0.0059\\
\bottomrule
\end{tabular}
\caption{Deviation from Gaussianity statistics on test data for image, audio, and text data. We have the scalar Kolmogorov-Smirnov statistic, the vectorial relative error in the optimal transport cost, and matricial relative error in Free Loss. The image and text datasets were trained with a batch size of $b = 128$ while the audio dataset trained using a batch size of $b = 64$. We employed $ d= 32$ dimensional embeddings for all datasets}
\label{tab:encoder-real-delta-metrics}
\end{table*}

\textbf{Data.} Let $n \in \mathbb{N}$ be the number of samples of training data. Let $p$ be the dimensionality of the training sample $x_i$. Let  $\mu\in\mathbb{R}^p$ be a mean vector, we define the input data as follows. 
Let $s\in\{\pm1\}^{2n}$ be a balanced label vector with $\sum_{i=1}^{2n}s_i=0$.
For each sample $i=1,\dots,2n$, draw a base vector $u_i\in\mathbb{R}^p$ whose coordinates are i.i.d.\ $\chi^2_1$, where $\chi^2_1 = |\mathcal{N}(0,1)|^2$ is the chi-squared distribution. We generate $n = 2560$ training samples and the same number of test data samples via the construction: 
\[
    x_i \;=\, 0.5 \, u_i \;+\; s_i\,\mu,
\]
by setting $p = 2$ and $\mu = \begin{bmatrix} 5 & 5  \end{bmatrix}^T$, Figure~\ref{fig:gmm-data} shows the samples of the training data set that we shall use for all the simulation in this paper. 

\textbf{Network.} We then train a four layer fully connected MLP with $\tanh$ activation to learn a $d = 32$ dimensional embedding with a batch size of $b = 256$.  We train the network\footnote{The exact architecture is $\mathrm{Linear}(2,32)\!\to\!\mathrm{Linear}(32,32)\!\to\!\tanh\!\to\!\mathrm{Linear}(32,32)\!\to\!\tanh\!\to\!\mathrm{Linear}(32,32)\!\to\!\tanh\!\to\!\mathrm{Linear}(32,32)$.} for 2000 epochs, using mini-batched Adam with a learning rate of $10^{-3}.$

\textbf{Deviation from Gaussianity Statistics.} To confirm that the output is Gaussian, we compute a variety of different deviation-from-Gaussianity metrics. In particular, we consider the following statistics. 

\begin{enumerate}[nosep, leftmargin=*,itemsep = 2pt]
    \item \textbf{Scalar:} We flatten $\mathcal{E}_{\mathrm{opt}}(X_b)$ into a $db$-dimensional vector, $X_b$ is a batch of data. To check if the entries are from $\mathcal{N}(0,1)$ by plotting the histogram and the qq-plot for the entries. Numerically, we also compute the Kolmogorov-Smirnov statistic.
    \item \textbf{Vectorial:} We consider each column of the $d \times b$ matrix output $\mathcal{E}_{\mathrm{opt}}(X_b)$ as a sample of a distribution in $\mathbb{R}^d$. Then to verify, whether we have Gaussian samples, we compute the relative excess optimal transport cost 
    \begin{equation} \label{eq:ot}
       \Delta_{\texttt{OT}} :=  \frac{\left|\texttt{OT}(\mathcal{E}_{\mathrm{opt}}(X_b), G) - \mathbb{E}\left[\texttt{OT}(G, \tilde{G})\right]\right|}{\mathbb{E}\left[\texttt{OT}(G, \tilde{G})\right]}
    \end{equation}
    where $G, \tilde{G}$ are matrices with IID $\mathcal{N}(0,1)$ entries and $\texttt{OT}(A,B)$ is the discrete optimal transport cost 
    \[
        \texttt{OT}(A,B)\ :=\ \min_{\pi\in S_b}\ \frac{1}{b}\sum_{j=1}^b\|a_j-b_{\pi(j)}\|_2^2,
    \]
    where $S_b$ is the set of permutations
    \item \textbf{Matricial:} Finally, we consider the matrix verification of Gaussianity. In particular, we compute the singular values of $\mathcal{E}_{\mathrm{opt}}(X_b)$, and verify that the distribution is the Mar\v{c}enko-Pastur distribution. 
\end{enumerate}

\textbf{Results.} Figure \ref{fig:loss} shows the Free Loss $\mathcal{L}_{free}$ on training and test data during training. As the figure shows, we see that after about 50 epochs, we see that the loss on both training and test data, is within 1\% of the theoretical minimum Free Loss (red dotted line). 

Figures~\ref{fig:hist-2000-train},~\ref{fig:hist-2000-test},~\ref{fig:qq-2000-train}, and~\ref{fig:qq-2000-test}, show the histogram and QQ plot. Here we see that both metrics are matched perfectly. Finally, Figures~\ref{fig:svd-2000-train} and~\ref{fig:svd-2000-test} shows the singular value squared distribution and that it matches the Mar\v{c}enko-Pastur distribution. Hence we see that we have successfully Gaussianized the data. Appendix~\ref{app:batchsize} explores the effect of the batch size. 

\begin{remark}[Training Length] We note that while the we hit the minimum Free Loss relatively quickly (around epoch 75), we have to continue training well past this (to epoch 2000) to produce nearly Gausssian outputs. This can be seen from Figure~\ref{fig:moments}, where we see that we do not match the  higher order moments of the normal distribution until much later in training. More plots for the training dynamics can be found in Appendix~\ref{app:dynamics}. 
\end{remark} 

\textbf{Real Data.} We train Free Loss minimizing encoders on real data. We do this for image data -- MNIST \citep{6296535}, CIFAR10 \citep{Krizhevsky2009LearningML}, CelebA \citep{liu2015faceattributes}, and a subset of Imagenet \citep{5206848}, text -- IMDB movie reviews \citep{maas-EtAl:2011:ACL-HLT2011}, and audio -- Enviromental Sounds 50 \citep{piczak2015dataset}, Urbansound \citep{Salamon:UrbanSound:ACMMM:14}, and GTZAN \citep{Tzanetakis2001AutomaticMG}. Table~\ref{tab:autoencoder-delta-metrics} shows that in all cases, we can Gaussianize. 
The relevant plots can be seen in Appendix~\ref{app:real}.

\begin{figure*}[!htbp]
\centering
\begin{subfigure}{0.45\linewidth}
\includegraphics[width=\linewidth]{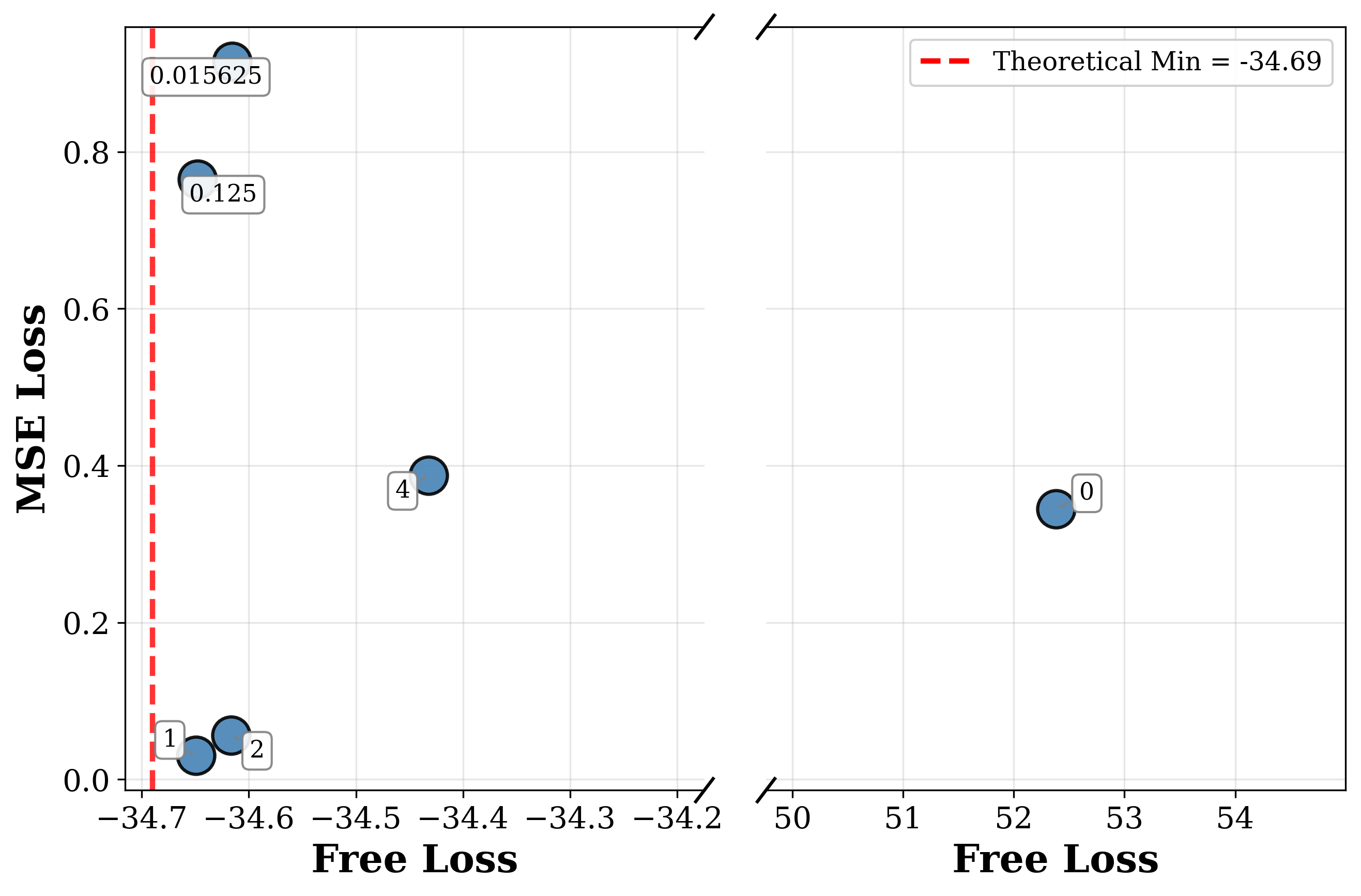}
\caption{Free Loss regularized loss (Eq.~\ref{eq:free-autoencoder}).}
\end{subfigure}
\quad \quad
\begin{subfigure}{0.45\linewidth}
\includegraphics[width=\linewidth]{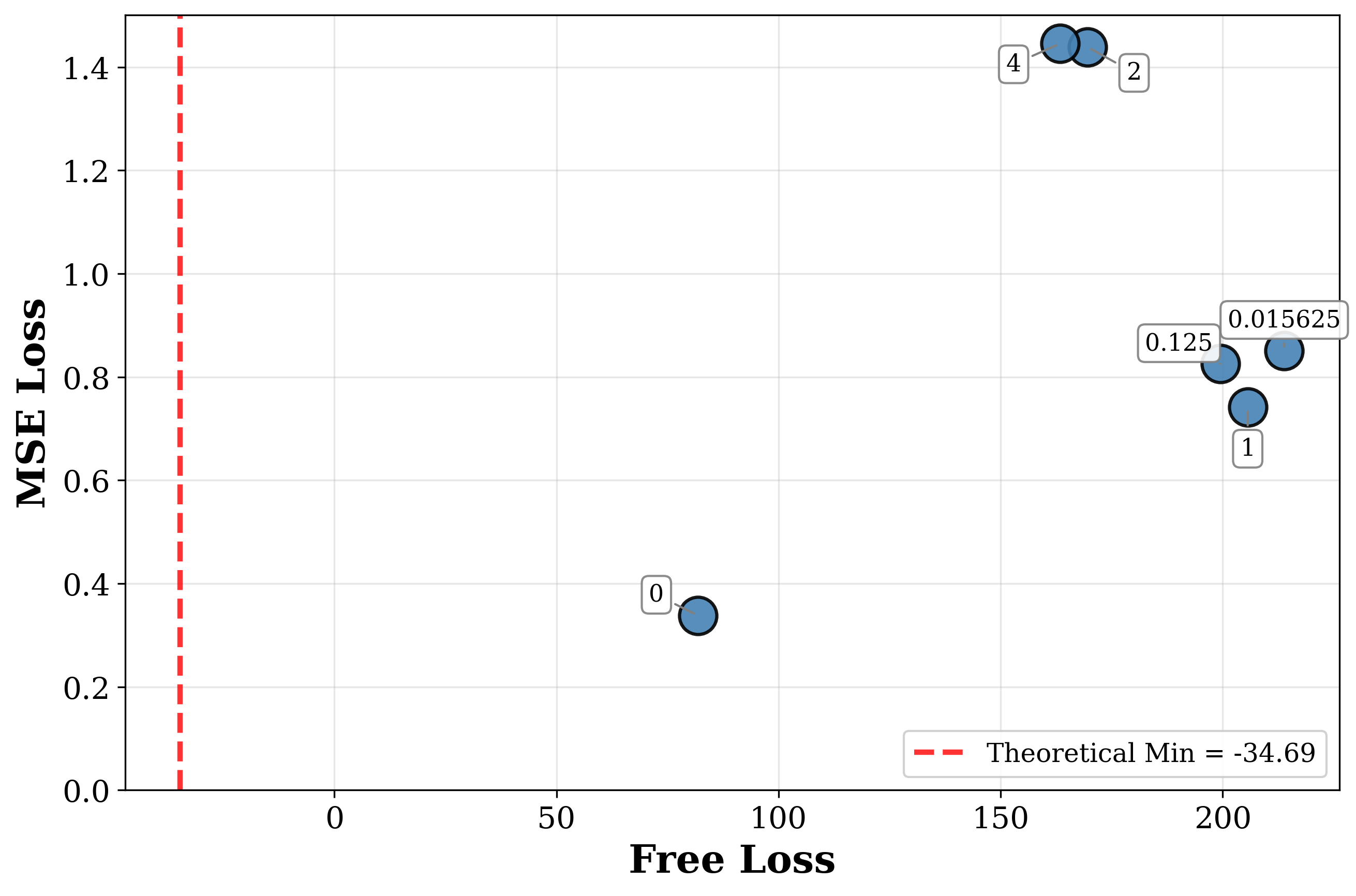}
\caption{Tikhonov regularized loss (Eq.~\ref{eq:tikhonov-autoencoder}).}
\end{subfigure}
\caption{Figure showing the Free Loss vs MSE for training an autoencoder for different values of $\tau$ as described in Section~\ref{sec:free autoencoder}. Here the theoretical minimum Free Loss depicted at $-34.69$ is  the value obtained by averaging the empirical Free Loss of many independent  $32 \times 256$ i.i.d. Gaussian matrices.}
\label{fig:free-autoencoder-L-tau}
\end{figure*}

\begin{figure*}[t]
    \centering
    \subfloat[Tikhonov-regularized autoencoder ($\tau=1$)]{%
        \includegraphics[width=0.32\linewidth]{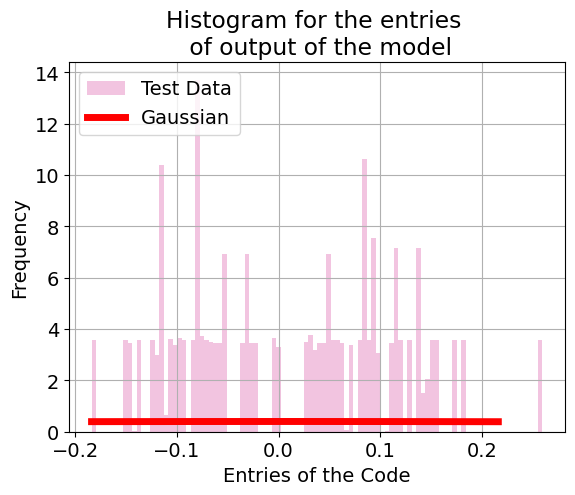}
    }
    \subfloat[Unregularized autoencoder ($\tau=0$)]{%
        \includegraphics[width=0.32\linewidth]{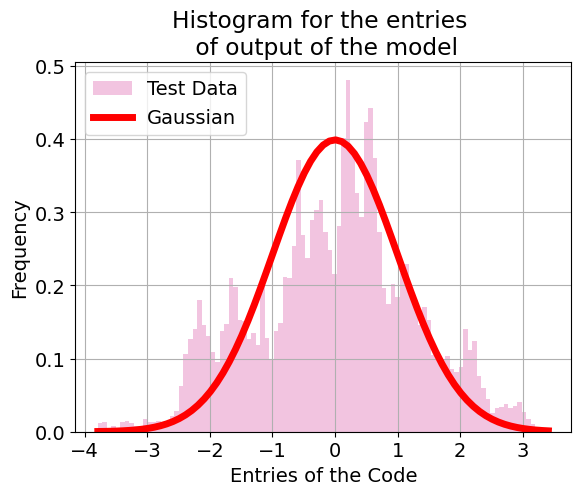}
    }
    \subfloat[Free-loss–regularized autoencoder ($\tau=1$)]{%
        \includegraphics[width=0.32\linewidth]{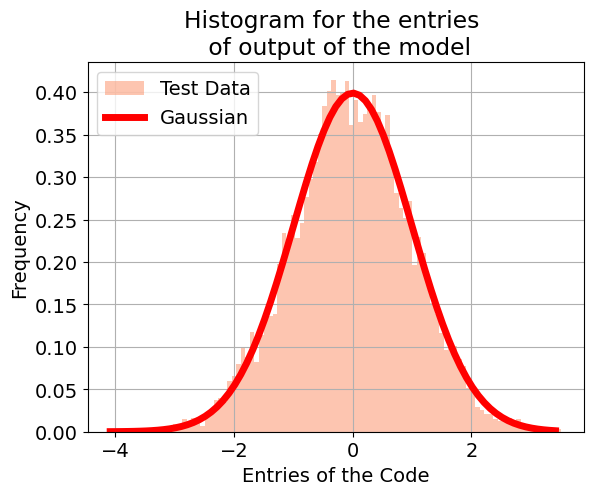}
    }
    \caption{Histograms of autoencoder code entries under three training regimes as described in Section~\ref{sec:free autoencoder}: Tikhonov regularization ($\tau=1$), no regularization ($\tau=0$), and free-loss regularization ($\tau=1$), evaluated on the test set after 2000 epochs.}
    \label{fig:autoencoder-firstcol}
\end{figure*}

The encoder models consist of specialized architectures tailored to each data modality to produce a 32-dimensional embedding. For audio, a Conformer was utilized \citep{gulati2020conformer}. Text inputs were encoded using a standard Transformer encoder architecture \citep{vaswani2017attention}, aggregating the sequence representations through attention pooling to yield a fixed dimensional embedding. Image encoding leveraged an EfficientViT backbone (efficientvit m2 variant) without pretraining, followed by a linear projection from its 100-dimensional features to the 32-dimensional target space \citep{liu2023efficientvit}. These encoders facilitate multimodal representation learning by mapping diverse inputs to a unified latent space.

\section{Free Gaussianizing Autoencoder}\label{sec:free autoencoder}
Let $ \{\mathcal{E}_{\theta}: x \in X \mapsto \mathbb{R}^d\}$ be the encoder and  $ \{\mathcal{D}_{\gamma}: \mathbb{R}^d  \mapsto x \in X\}$ be the decoder. We train a Free Gaussianizing encoder $\mathcal{E}_{\sf opt}$ and decoder $\mathcal{D}_{\sf opt}$ by training to minimize the loss function:
\begin{align}\label{eq:free-autoencoder}
    \frac{1}{b}\left\|\mathcal{D}_{\gamma}(\mathcal{E}_{\theta}(X_b)) - X_b\right\|_F^2 + \tau \, \mathcal{L}_{\sf free}(\mathcal{E}_{\theta}(X_b))
\end{align}
where $\mathcal{E}_{\theta}(X_b)$ is the $d \times b$ batch-code matrix that we are looking to Gaussianize and $\| \cdot \|_F$ is the Frobenius norm of the matrix argument. Here we use the standard mean squared error loss (MSE) and we use Free Loss $\mathcal{L}_{\sf free}$ as a regularizer with regularization strength $\tau$. 

To illustrate the value of the matricial Free Loss we compare this to a Tikhonov regularized trained trained to minimize the loss function:
\begin{align} \label{eq:tikhonov-autoencoder}
    \frac{1}{b}\left\|\mathcal{D}_{\gamma}(\mathcal{E}_{\theta}(X_b)) - X_b \right\|_F^2 + \tau \left\| \mathcal{E}_{\theta}(X_b) \right\|_F^2.
\end{align}
Let $\mathcal{D}_{\sf Tikh}, \mathcal{E}_{\sf Tikh}$ be the networks trained using the Tikhonov regularized loss and note that $\left\| \mathcal{E}(X_b) \right\|_F^2$ is the negative log likelihood for a Gaussian distribution. 

We begin by doing a sweep of $\tau$ for both, the Free Loss regularized autoencoder, and Tikhonov regularized autoenconder. This can be seen in Figure~\ref{fig:free-autoencoder-L-tau}. Here we can see that for the Free Loss regularized autoencoder, we can simultaneously optimize both the reconstruction MSE and Free Loss. Figure~\ref{fig:free-autoencoder-L-tau} (Left) we see that for $\tau = 1$, we are close to the optimal Free Loss and MSE loss. On the other hand, we see that for the Tikhonov regularized autoencoder for $\tau > 0$, we obtain suboptimal MSE and non-Gaussian codes as evidenced by the corresponding sub-optimal Free Losses. 

Next, we compare the Tikhonov regularized autoencoder ($\tau = 1$), the Free Loss regularized autoencoder ($\tau = 1$), and the unregularized autoencoder $(\tau = 0)$. Table~\ref{tab:autoencoder-delta-metrics} shows some of deviation-from-Gaussianization metrics. Here we see that the Tikhonov regularized and the unregularized autoencoder have codes that are far from Gaussian. Finally, we see that the Free Loss regularized autoencoder, results in Gaussian code.  More plots can be found in Appendix~\ref{app:autoencoder}.

\begin{table*}[ht]
\centering
\begin{tabular}{lccccc}
\toprule
\textbf{Method} &  KS & $\Delta_{\texttt{OT}}$ & MSE & Rel. Err. Free Loss & Rel. Err. 8th Moment \\
\midrule
Unregularized & $0.23 \pm 0.08$ & $0.713 \pm 0.294$ & $0.56 \pm 0.13$ & $4.26 \pm 0.86$ & $24 \pm 38$\\
Tikhonov  & $0.49 \pm 0.02$ & $0.049 \pm 0.006$ & $0.72 \pm 0.13$ & $1.19 \pm 0.48$ & $1 \pm 0$ \\
Free Loss  & $\mathbf{0.03 \pm 0.01}$ & $\mathbf{0.039 \pm 0.010}$ & $\mathbf{0.18 \pm 0.29}$ &  $\mathbf{0.004 \pm 0.005}$ & $\mathbf{0.16 \pm 0.15}$ \\
\bottomrule
\end{tabular}
\caption{\textbf{Deviation-from-Gaussianization metrics on test codes (mean $\pm$ std over 10 trials).} \emph{KS} is the Kolmogorov–Smirnov statistic comparing standardized code entries to $\mathcal{N}(0,1)$. $\Delta_{\texttt{OT}}$ is Equation~\ref{eq:ot}; \emph{MSE} is the reconstruction mean-squared error. \emph{Rel. Err. Free Loss} is the relative deviation of the matricial free-energy objective from its Gaussian reference, and \emph{Rel. Err. 8th Moment} is the relative error of the empirical 8th central moment of code entries from the Gaussian value $105$; lower is better for all metrics. Here relative error is $RelErr(T)=|(Tref-T)/Tref|.$ The best metric is in bold.}
\label{tab:autoencoder-delta-metrics}
\end{table*}

\subsection{Application: Solving Inverse problems}\label{sec:inverse problems}
Suppose we are given measurements of the form $z = \mathcal{A}(x)$.
If we have a Gaussianizing encoder  $\mathcal{E}_{\sf opt}$ then $\mathcal{E}_{\sf opt}(x)$ should ideally be an i.i.d. Gaussian vector with zero mean and unit variance entries. We can utilize this Gaussian vectorial prior to formulate the recovery problem in data space as minimizing the following:
\begin{equation} \label{eq:recover-free}
    \|z - \mathcal{A}\left(\mathcal{D}_{\gamma}(\mathcal{E}_{\theta}(x))\right) \|_2^2 + \rho\|\mathcal{E}_{\theta}(x)\|^2.
\end{equation}

\begin{figure}[htp]
    \centering
    \includegraphics[width=0.75\linewidth]{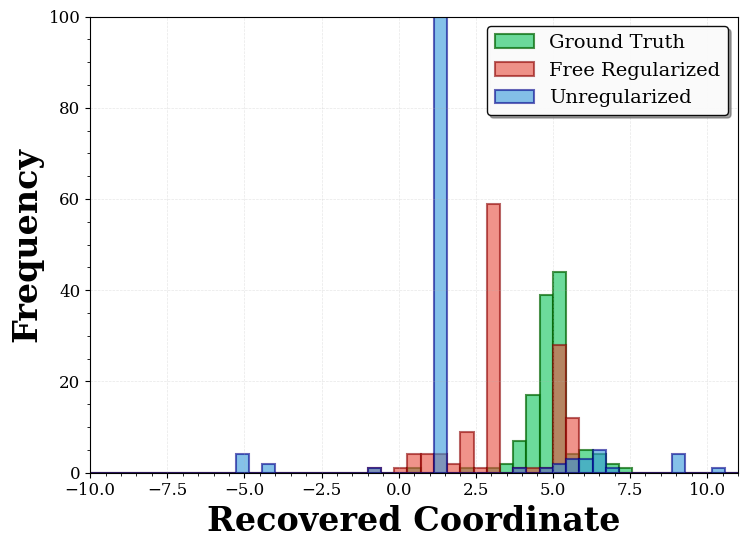}
    \caption{Histogram of the recovered coordinate conditioned on the sign of the projected coordinate $z$ for the free regularized autoencoder (red) versus  the Tikhonov regularized (classical) autoencoder (blue)  for the setting where $z$ in (\ref{eq:recover-free}) is positive. The ground truth is plotted in blue.  We used $\rho = 0.0005$ and performed 5000 gradient steps using SGD with a learning rate of $10^{-3}$ to optimize \eqref{eq:recover-free} for $x$ in (\ref{eq:recover-free}). } 
    \label{fig:recon}
\end{figure}

Figure~\ref{fig:recon} shows the results of the experiment where $\mathcal{A}(x) = P x$ for $P =  \begin{bmatrix}1 & 0  \end{bmatrix}$.  Here we see that the free regularized autoencoder, better recovers the missing coordinate, while preserving the given coordinate. In particular, we see the Free Regularized autoencoder has a MSE of $1.4$ for the given coordinate, and $5.6$ for the missing one. While the unregularized autoencoder has an MSE of $18.5$ for the given coordinate and 10.4 for the missing coordinate. We used 5120 training points, 256 test points, and embedding dimension $d = 32$, a batch size $b =  256$ and our four layer encoder and decoders as before. The encoder and decoder were pre-trained for 2000 epochs.

\section{Future Work}
Our matricial free energy loss function provides a principled route to Gaussianizing autoencoder codes. Several directions remain open. One is a sharper theoretical analysis of the induced optimization landscape, including convergence behavior, local minima, fundamental limits on Gaussianization and how architectural choices shape spectral dynamics and the quality of the resulting. Another is to quantify the role of batch size on the approximation performance, since it sets the empirical spectrum of the batch-code matrix warrants careful study. Beyond, representation learning, the framework could be extended to generative models such as VAEs or flows, enabling Gaussianized latent sampling and testing its impact on sample quality and diversity. It would also be natural to explore new loss designs by combining the matricial free energy with perceptual, contrastive, class-conditional or optimal transport -based objectives to strengthen robustness across modalities.   Gaussianizing latent codes can be useful for distribution shift detection because they allow us to leverage classical multivariate theory for deviation from Gaussianity in code space as in \citep{ebner2020tests,henze2002invariant,kay1993fundamentals}.

\section{Conclusions}
Using free probability and random matrix theory, we developed a regularization framework for autoencoders. Our method uses a matricial free energy-based loss function to encourage latent codes to mimic the spectral properties of Gaussian random matrices. This yields statistically robust, Gaussian-like representations that generalize well. We used this to create a free Gaussianizing autoencoder that minimizes reconstruction loss while producing Gaussian latent codes and demonstrate its potential utility in solving ill-posed inverse problems. The work connects deep learning and spectral theory, with future research areas including structured data and other unsupervised paradigms. 

\subsection*{Acknowledgment}

R. N is supported by AFOSR award  FA9550-25-1-0377.

\newpage

\bibliographystyle{plainnat}
\bibliography{references}

\clearpage
\newpage

 \clearpage
 \newpage

\appendix


\newcommand{\AEimg}[3]{%
  \IfFileExists{#1/#2-autoencer-#3}{%
    \includegraphics[width=\linewidth]{#1/#2-autoencer-#3}%
  }{%
    \includegraphics[width=\linewidth]{#1/#2-autoencoder-#3}%
  }%
}

\newcommand{\EncImg}[2]{%
  \IfFileExists{images/#1-#2}{%
    \includegraphics[width=\linewidth]{images/#1-#2}%
  }{%
    \includegraphics[width=\linewidth]{images/#2}%
  }%
}

\newcommand{\MomentImg}[2]{%
  \IfFileExists{new_auto_images/#1-#2.png}{%
    \includegraphics[width=\linewidth]{new_auto_images/#1-#2.png}%
  }{%
    \includegraphics[width=\linewidth]{new_auto_images/#2.png}%
  }%
}

\newcommand{\MakeEncoderEvolveFig}[4]{%
\begin{figure*}[!htbp]
\centering

\begin{subfigure}{\linewidth}
\centering
  \begin{subfigure}{0.32\linewidth}
    \EncImg{#1}{hist-epoch-#3-train.png}
    \caption{Empirical pdf of the entries vs.\ $\mathcal{N}(0,1)$.}
    \label{fig:enc-#1-hist-#3-train}
  \end{subfigure}
  \begin{subfigure}{0.32\linewidth}
    \EncImg{#1}{qq-train-epoch-#3-train.png}
    \caption{Q--Q plot vs.\ $\mathcal{N}(0,1)$.}
    \label{fig:enc-#1-qq-#3-train}
  \end{subfigure}
  \begin{subfigure}{0.32\linewidth}
    \EncImg{#1}{singular-vals-train-epoch-#3-train.png}
    \caption{Eigenvalue distribution of the sample covariance.}
    \label{fig:enc-#1-svd-#3-train}
  \end{subfigure}
  \caption{Scalar and matricial quantities for $\mathcal{E}_{\mathrm{opt}}(X_b^{\text{train}})$.}
  \label{fig:enc-#1-train-#3}
\end{subfigure}

\begin{subfigure}{\linewidth}
\centering
  \begin{subfigure}{0.32\linewidth}
    \EncImg{#1}{hist-epoch-#3-test.png}
    \caption{Empirical pdf of the entries vs.\ $\mathcal{N}(0,1)$.}
    \label{fig:enc-#1-hist-#3-test}
  \end{subfigure}
  \begin{subfigure}{0.32\linewidth}
    \EncImg{#1}{qq-test-epoch-#3-test.png}
    \caption{Q--Q plot vs.\ $\mathcal{N}(0,1)$.}
    \label{fig:enc-#1-qq-#3-test}
  \end{subfigure}
  \begin{subfigure}{0.32\linewidth}
    \EncImg{#1}{singular-vals-test-epoch-#3-test.png}
    \caption{Eigenvalue distribution of the sample covariance.}
    \label{fig:enc-#1-svd-#3-test}
  \end{subfigure}
  \caption{Scalar and matricial quantities for $\mathcal{E}_{\mathrm{opt}}(X_b^{\text{test}})$.}
  \label{fig:enc-#1-test-#3}
\end{subfigure}

\begin{subfigure}{\linewidth}
\centering
  \begin{subfigure}{0.32\linewidth}
    \MomentImg{#1}{var}
    \caption{Sample variance (train/test over time).}
    \label{fig:enc-#1-var}
  \end{subfigure}
  \begin{subfigure}{0.32\linewidth}
    \MomentImg{#1}{moment4}
    \caption{Sample fourth moment.}
    \label{fig:enc-#1-m4}
  \end{subfigure}
  \begin{subfigure}{0.32\linewidth}
    \MomentImg{#1}{moment6}
    \caption{Sample sixth moment.}
    \label{fig:enc-#1-m6}
  \end{subfigure}
  \caption{Moment matching for entries of $\mathcal{E}_{\mathrm{opt}}(X_b^{\text{train}})$ and $\mathcal{E}_{\mathrm{opt}}(X_b^{\text{test}})$.}
  \label{fig:enc-#1-moments}
\end{subfigure}

\caption{Visualization of outputs for a free Gaussianizing \textbf{encoder} (#2) at epoch #3. Top two rows: histogram, Q--Q plot, and eigenvalue distribution with Mar\v{c}enko--Pastur overlay (shape #4). Bottom row: variance, fourth, and sixth moments (red line = $\mathcal{N}(0,1)$ target). The higher moments, are affected by outlier entries, we believe changing the architecture can fix this. See Section~\ref{sec:real-plots} for more details about the experimental setup. }
\label{fig:evolve-encoder-#1-#3}
\end{figure*}
}

\newcommand{\MakeAEEvolveFig}[5]{%
\begin{figure*}[!htbp]
\centering

\begin{subfigure}{\linewidth}
\centering
  \begin{subfigure}{0.32\linewidth}
    \AEimg{#1}{#2}{hist-epoch-#4-train.png}
    \caption{Empirical pdf of the entries vs.\ $\mathcal{N}(0,1)$.}
    \label{fig:#2-hist-#4-train}
  \end{subfigure}
  \begin{subfigure}{0.32\linewidth}
    \AEimg{#1}{#2}{qq-train-epoch-#4-train.png}
    \caption{Q--Q plot vs.\ $\mathcal{N}(0,1)$.}
    \label{fig:#2-qq-#4-train}
  \end{subfigure}
  \begin{subfigure}{0.32\linewidth}
    \AEimg{#1}{#2}{singular-vals-train-epoch-#4-train.png}
    \caption{Eigenvalue distribution of the sample covariance.}
    \label{fig:#2-svd-#4-train}
  \end{subfigure}
  \caption{Scalar and matricial quantities for the code of \textbf{#3} on $X_b^{\text{train}}$.}
  \label{fig:#2-train-#4}
\end{subfigure}

\begin{subfigure}{\linewidth}
\centering
  \begin{subfigure}{0.32\linewidth}
    \AEimg{#1}{#2}{hist-epoch-#4-test.png}
    \caption{Empirical pdf of the entries vs.\ $\mathcal{N}(0,1)$.}
    \label{fig:#2-hist-#4-test}
  \end{subfigure}
  \begin{subfigure}{0.32\linewidth}
    \AEimg{#1}{#2}{qq-test-epoch-#4-test.png}
    \caption{Q--Q plot vs.\ $\mathcal{N}(0,1)$.}
    \label{fig:#2-qq-#4-test}
  \end{subfigure}
  \begin{subfigure}{0.32\linewidth}
    \AEimg{#1}{#2}{singular-vals-test-epoch-#4-test.png}
    \caption{Eigenvalue distribution of the sample covariance.}
    \label{fig:#2-svd-#4-test}
  \end{subfigure}
  \caption{Scalar and matricial quantities for the code of \textbf{#3} on $X_b^{\text{test}}$.}
  \label{fig:#2-test-#4}
\end{subfigure}

\begin{subfigure}{\linewidth}
\centering
  \begin{subfigure}{0.32\linewidth}
    \MomentImg{#2}{var}
    \caption{Sample variance (train/test over time).}
    \label{fig:#2-var}
  \end{subfigure}
  \begin{subfigure}{0.32\linewidth}
    \MomentImg{#2}{moment4}
    \caption{Sample fourth moment.}
    \label{fig:#2-m4}
  \end{subfigure}
  \begin{subfigure}{0.32\linewidth}
    \MomentImg{#2}{moment6}
    \caption{Sample sixth moment.}
    \label{fig:#2-m6}
  \end{subfigure}
  \caption{Moment matching for entries of the code on $X_b^{\text{train}}$ and $X_b^{\text{test}}$.}
  \label{fig:#2-moments}
\end{subfigure}

\caption{Visualization of outputs for the \textbf{#3 autoencoder} at epoch #4. Top two rows: histogram, Q--Q plot, and eigenvalue distribution with Mar\v{c}enko--Pastur overlay (shape #5). Bottom row: variance, fourth, and sixth moments (red line = $\mathcal{N}(0,1)$ target). The higher moments, are affected by outlier entries, we believe changing the architecture can fix this. See Section~\ref{sec:E2} for more details about the experimental setup}
\label{fig:evolve-#2-#4}
\end{figure*}
}



\newcommand{\MakeCodeDynamicsFigures}[3]{%
\begin{figure*}[t]
  \centering

  \begin{subfigure}[t]{0.32\textwidth}
    \centering
    \includegraphics[width=\linewidth]{images/#1-hist-epoch-0-train.png}
    \caption{#2 Epoch 0 – Histogram}\label{fig:hist-0-train-#1}
  \end{subfigure}\hfill
  \begin{subfigure}[t]{0.32\textwidth}
    \centering
    \includegraphics[width=\linewidth]{images/#1-qq-train-epoch-0-train.png}
    \caption{#2 Epoch 0 – QQ Plot}\label{fig:qq-0-train-#1}
  \end{subfigure}\hfill
  \begin{subfigure}[t]{0.32\textwidth}
    \centering
    \includegraphics[width=\linewidth]{images/#1-singular-vals-train-epoch-0-train.png}
    \caption{#2 Epoch 0 – Singular Values}\label{fig:svd-0-train-#1}
  \end{subfigure}

  \par\medskip

  \begin{subfigure}[t]{0.32\textwidth}
    \centering
    \includegraphics[width=\linewidth]{images/#1-hist-epoch-10-train.png}
    \caption{#2 Epoch 50 – Histogram}\label{fig:hist-10-train-#1}
  \end{subfigure}\hfill
  \begin{subfigure}[t]{0.32\textwidth}
    \centering
    \includegraphics[width=\linewidth]{images/#1-qq-train-epoch-10-train.png}
    \caption{#2 Epoch 50 – QQ Plot}\label{fig:qq-10-train-#1}
  \end{subfigure}\hfill
  \begin{subfigure}[t]{0.32\textwidth}
    \centering
    \includegraphics[width=\linewidth]{images/#1-singular-vals-train-epoch-10-train.png}
    \caption{#2 Epoch 50 – Singular Values}\label{fig:svd-10-train-#1}
  \end{subfigure}

  \par\medskip

  \begin{subfigure}[t]{0.32\textwidth}
    \centering
    \includegraphics[width=\linewidth]{images/#1-hist-epoch-25-train.png}
    \caption{#2 Epoch 125 – Histogram}\label{fig:hist-25-train-#1}
  \end{subfigure}\hfill
  \begin{subfigure}[t]{0.32\textwidth}
    \centering
    \includegraphics[width=\linewidth]{images/#1-qq-train-epoch-25-train.png}
    \caption{#2 Epoch 125 – QQ Plot}\label{fig:qq-25-train-#1}
  \end{subfigure}\hfill
  \begin{subfigure}[t]{0.32\textwidth}
    \centering
    \includegraphics[width=\linewidth]{images/#1-singular-vals-train-epoch-25-train.png}
    \caption{#2 Epoch 125 – Singular Values}\label{fig:svd-25-train-#1}
  \end{subfigure}

  \caption{#2: progression of output-code distribution on \emph{train} data across epochs. Each row shows the histogram, QQ plot, and singular-value distribution at epochs 0, 50, and 125. Red lines indicate theoretical densities: standard normal (hist/QQ) and Mar\v{c}enko–Pastur (singular values) with shape #3.}
  \label{fig:dynamics-train-#1}
\end{figure*}

\begin{figure*}[t]
  \centering

  \begin{subfigure}[t]{0.32\textwidth}
    \centering
    \includegraphics[width=\linewidth]{images/#1-hist-epoch-0-test.png}
    \caption{#2 Epoch 0 – Histogram}\label{fig:hist-0-test-#1}
  \end{subfigure}\hfill
  \begin{subfigure}[t]{0.32\textwidth}
    \centering
    \includegraphics[width=\linewidth]{images/#1-qq-test-epoch-0-test.png}
    \caption{#2 Epoch 0 – QQ Plot}\label{fig:qq-0-test-#1}
  \end{subfigure}\hfill
  \begin{subfigure}[t]{0.32\textwidth}
    \centering
    \includegraphics[width=\linewidth]{images/#1-singular-vals-test-epoch-0-test.png}
    \caption{#2 Epoch 0 – Singular Values}\label{fig:svd-0-test-#1}
  \end{subfigure}

  \par\medskip

  \begin{subfigure}[t]{0.32\textwidth}
    \centering
    \includegraphics[width=\linewidth]{images/#1-hist-epoch-10-test.png}
    \caption{#2 Epoch 50 – Histogram}\label{fig:hist-10-test-#1}
  \end{subfigure}\hfill
  \begin{subfigure}[t]{0.32\textwidth}
    \centering
    \includegraphics[width=\linewidth]{images/#1-qq-test-epoch-10-test.png}
    \caption{#2 Epoch 50 – QQ Plot}\label{fig:qq-10-test-#1}
  \end{subfigure}\hfill
  \begin{subfigure}[t]{0.32\textwidth}
    \centering
    \includegraphics[width=\linewidth]{images/#1-singular-vals-test-epoch-10-test.png}
    \caption{#2 Epoch 50 – Singular Values}\label{fig:svd-10-test-#1}
  \end{subfigure}

  \par\medskip

  \begin{subfigure}[t]{0.32\textwidth}
    \centering
    \includegraphics[width=\linewidth]{images/#1-hist-epoch-25-test.png}
    \caption{#2 Epoch 125 – Histogram}\label{fig:hist-25-test-#1}
  \end{subfigure}\hfill
  \begin{subfigure}[t]{0.32\textwidth}
    \centering
    \includegraphics[width=\linewidth]{images/#1-qq-test-epoch-25-test.png}
    \caption{#2 Epoch 125 – QQ Plot}\label{fig:qq-25-test-#1}
  \end{subfigure}\hfill
  \begin{subfigure}[t]{0.32\textwidth}
    \centering
    \includegraphics[width=\linewidth]{images/#1-singular-vals-test-epoch-25-test.png}
    \caption{#2 Epoch 125 – Singular Values}\label{fig:svd-25-test-#1}
  \end{subfigure}

  \caption{#2: progression of output-code distribution on \emph{test} data across epochs. Each row shows the histogram, QQ plot, and singular-value distribution at epochs 0, 50, and 125. Red lines indicate theoretical densities: standard normal (hist/QQ) and Mar\v{c}enko–Pastur (singular values) with shape #3.}
  \label{fig:dynamics-test-#1}
\end{figure*}
} 



\newcommand{\MakeAECodeDynamicsFigures}[4]{%
\begin{figure*}[t]
  \centering

  \begin{subfigure}[t]{0.32\textwidth}
    \centering
    \AEimg{#1}{#2}{hist-epoch-0-train.png}
    \caption{#3 Epoch 0 – Histogram}\label{fig:ae-hist-0-train-#2}
  \end{subfigure}\hfill
  \begin{subfigure}[t]{0.32\textwidth}
    \centering
    \AEimg{#1}{#2}{qq-train-epoch-0-train.png}
    \caption{#3 Epoch 0 – QQ Plot}\label{fig:ae-qq-0-train-#2}
  \end{subfigure}\hfill
  \begin{subfigure}[t]{0.32\textwidth}
    \centering
    \AEimg{#1}{#2}{singular-vals-train-epoch-0-train.png}
    \caption{#3 Epoch 0 – Singular Values}\label{fig:ae-svd-0-train-#2}
  \end{subfigure}

  \par\medskip

  \begin{subfigure}[t]{0.32\textwidth}
    \centering
    \AEimg{#1}{#2}{hist-epoch-2-train.png}
    \caption{#3 Epoch 10 – Histogram}\label{fig:ae-hist-10-train-#2}
  \end{subfigure}\hfill
  \begin{subfigure}[t]{0.32\textwidth}
    \centering
    \AEimg{#1}{#2}{qq-train-epoch-2-train.png}
    \caption{#3 Epoch 10 – QQ Plot}\label{fig:ae-qq-10-train-#2}
  \end{subfigure}\hfill
  \begin{subfigure}[t]{0.32\textwidth}
    \centering
    \AEimg{#1}{#2}{singular-vals-train-epoch-2-train.png}
    \caption{#3 Epoch 10 – Singular Values}\label{fig:ae-svd-10-train-#2}
  \end{subfigure}

  \par\medskip

  \begin{subfigure}[t]{0.32\textwidth}
    \centering
    \AEimg{#1}{#2}{hist-epoch-10-train.png}
    \caption{#3 Epoch 50 – Histogram}\label{fig:ae-hist-25-train-#2}
  \end{subfigure}\hfill
  \begin{subfigure}[t]{0.32\textwidth}
    \centering
    \AEimg{#1}{#2}{qq-train-epoch-10-train.png}
    \caption{#3 Epoch 50 – QQ Plot}\label{fig:ae-qq-25-train-#2}
  \end{subfigure}\hfill
  \begin{subfigure}[t]{0.32\textwidth}
    \centering
    \AEimg{#1}{#2}{singular-vals-train-epoch-10-train.png}
    \caption{#3 Epoch 50 – Singular Values}\label{fig:ae-svd-25-train-#2}
  \end{subfigure}

  \caption{#3 (autoencoder): progression of output-code distribution on \emph{train} data across epochs 0, 10, and 50. Red lines show standard normal (hist/QQ) and Mar\v{c}enko–Pastur (singular values) with shape #4.}
  \label{fig:ae-dynamics-train-#2}
\end{figure*}

\begin{figure*}[t]
  \centering

  \begin{subfigure}[t]{0.32\textwidth}
    \centering
    \AEimg{#1}{#2}{hist-epoch-0-test.png}
    \caption{#3 Epoch 0 – Histogram}\label{fig:ae-hist-0-test-#2}
  \end{subfigure}\hfill
  \begin{subfigure}[t]{0.32\textwidth}
    \centering
    \AEimg{#1}{#2}{qq-test-epoch-0-test.png}
    \caption{#3 Epoch 0 – QQ Plot}\label{fig:ae-qq-0-test-#2}
  \end{subfigure}\hfill
  \begin{subfigure}[t]{0.32\textwidth}
    \centering
    \AEimg{#1}{#2}{singular-vals-test-epoch-0-test.png}
    \caption{#3 Epoch 0 – Singular Values}\label{fig:ae-svd-0-test-#2}
  \end{subfigure}

  \par\medskip

  \begin{subfigure}[t]{0.32\textwidth}
    \centering
    \AEimg{#1}{#2}{hist-epoch-2-test.png}
    \caption{#3 Epoch 10 – Histogram}\label{fig:ae-hist-10-test-#2}
  \end{subfigure}\hfill
  \begin{subfigure}[t]{0.32\textwidth}
    \centering
    \AEimg{#1}{#2}{qq-test-epoch-2-test.png}
    \caption{#3 Epoch 10 – QQ Plot}\label{fig:ae-qq-10-test-#2}
  \end{subfigure}\hfill
  \begin{subfigure}[t]{0.32\textwidth}
    \centering
    \AEimg{#1}{#2}{singular-vals-test-epoch-2-test.png}
    \caption{#3 Epoch 10 – Singular Values}\label{fig:ae-svd-10-test-#2}
  \end{subfigure}

  \par\medskip

  \begin{subfigure}[t]{0.32\textwidth}
    \centering
    \AEimg{#1}{#2}{hist-epoch-10-test.png}
    \caption{#3 Epoch 50 – Histogram}\label{fig:ae-hist-25-test-#2}
  \end{subfigure}\hfill
  \begin{subfigure}[t]{0.32\textwidth}
    \centering
    \AEimg{#1}{#2}{qq-test-epoch-10-test.png}
    \caption{#3 Epoch 50 – QQ Plot}\label{fig:ae-qq-25-test-#2}
  \end{subfigure}\hfill
  \begin{subfigure}[t]{0.32\textwidth}
    \centering
    \AEimg{#1}{#2}{singular-vals-test-epoch-10-test.png}
    \caption{#3 Epoch 50 – Singular Values}\label{fig:ae-svd-25-test-#2}
  \end{subfigure}

  \caption{#3 (autoencoder): progression of output-code distribution on \emph{test} data across epochs 0, 10, and 50. Red lines show standard normal (hist/QQ) and Mar\v{c}enko–Pastur (singular values) with shape #4.}
  \label{fig:ae-dynamics-test-#2}
\end{figure*}
} 

\section{Theory}
\label{app:theory}

The foundational result in this context links the free Poisson distribution to the maximization of a specific functional, which combines the Voiculescu free entropy with a logarithmic potential term from \cite{hiai2006semicircle} is 

\begin{proposition}[Maximization Principle for Free Poisson Distribution] \label{prop:free}
Let $\mu$ be a probability measure on $\mathbb{R}$. The free entropy functional $\Phi_\theta(\mu)$ is defined as:
\[
    \Psi_\theta(\mu) = \chi(\mu) - \int \left(\lambda - (\theta-1)\log(\lambda)\right)d\mu(\lambda) 
\]
where $\theta > 0$ and $\chi(\mu)$ is the Voiculescu free entropy given by:
\[
    \chi(\mu) = \iint \log|\lambda-\tilde{\lambda}| d\mu(\lambda)d\mu(\tilde{\lambda}) 
\]
The unique probability measure that maximizes the functional $\Psi_\theta(\mu)$ is the free Poisson distribution, $\mu^{\textsf{F-P}}_{\theta}$ \cite[Proposition 5.3.7]{hiai2006semicircle}.
\end{proposition}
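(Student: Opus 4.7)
The plan is to prove the free Poisson variational principle via the standard potential-theoretic route used throughout Hiai--Petz for free-probability extremal problems. The argument has three pieces: establishing strict concavity plus existence of a maximizer, deriving the Euler--Lagrange equation, and explicitly verifying that $\mu^{\textsf{F-P}}_\theta$ satisfies that equation.

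First I would show that $\Psi_\theta$ is strictly concave and upper semicontinuous on the space of Borel probability measures on $(0,\infty)$ with finite logarithmic energy. Strict concavity follows from the classical fact that the bilinear form $(\mu,\nu)\mapsto \iint \log|\lambda-\tilde\lambda|\,d\mu(\lambda)\,d\nu(\tilde\lambda)$ is negative definite on signed measures of total mass zero, while the linear term $-\int V\,d\mu$ with $V(\lambda)=\lambda-(\theta-1)\log\lambda$ is irrelevant to concavity. The confining potential $V$ grows to $+\infty$ both at $0^+$ (for $\theta>1$) and at $\infty$, which together with the logarithmic repulsion makes the sublevel sets of $-\Psi_\theta$ tight, so a maximizer exists by weak-$*$ compactness. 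Uniqueness follows from strict concavity.

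Next I would derive the first-order optimality condition. Considering a feasible perturbation $\mu^*+\epsilon(\nu-\mu^*)$ along a probability measure $\nu$ and sending $\epsilon\to 0^+$ yields the Euler--Lagrange system
\[
2\int \log|\lambda-\tilde\lambda|\,d\mu^*(\tilde\lambda) -\lambda + (\theta-1)\log \lambda = C \ \text{on } \mathrm{supp}(\mu^*),
\]
with $\leq C$ off the support, where $C$ is the Lagrange multiplier for the unit-mass constraint. By strict concavity, any probability measure satisfying this identity is the unique maximizer of $\Psi_\theta$, so it suffices to produce one.

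Finally I would verify that $\mu^{\textsf{F-P}}_\theta$ satisfies the Euler--Lagrange condition. The cleanest route uses the Stieltjes/Cauchy transform $G(z)=\int (z-\lambda)^{-1} d\mu^{\textsf{F-P}}_\theta(\lambda)$, which for the free Poisson distribution solves a known quadratic algebraic equation and has an explicit closed form. Differentiating the Euler--Lagrange identity in $\lambda$ reduces it to a boundary-value condition on $\operatorname{Re} G$ at points of the support, which can be read off directly from the closed-form $G$; the constant $C$ is then fixed by evaluating the resulting potential at a single convenient point. The main obstacle is the careful bookkeeping of the logarithmic singularity at the left endpoint of the support: when $\theta \in (0,1)$ the free Poisson carries an atom at $0$, so the $(\theta-1)\log\lambda$ term is no longer integrable against $\mu^{\textsf{F-P}}_\theta$, and one must either restrict to $\theta\geq 1$ (the regime that matters for Proposition~\ref{prop:free entropy max}, since $\theta=1/c$ with $c\in(0,1]$) or handle the atom as a separate case, which is how Hiai--Petz proceed. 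As an independent cross-check, one can re-derive the result from a large-deviation perspective: the joint eigenvalue density of the (suitably normalized) Wishart ensemble, after taking logarithms and dividing by $N^2$, converges to $\Psi_\theta$ as a good rate function, and Proposition~\ref{prop:mandp limit} then forces the free Poisson to be the unique maximizer.
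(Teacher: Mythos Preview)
The paper does not actually prove this proposition: it is stated in the appendix with the citation to \cite[Proposition~5.3.7]{hiai2006semicircle} built into the statement, and is then used as a black box to derive the Mar\v{c}enko--Pastur maximization principle by a change of variables. So there is nothing to compare against on the paper's side beyond the reference.

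Your outline is the standard potential-theoretic argument and is essentially how Hiai--Petz prove the result, so in that sense you have reconstructed the cited proof rather than offered an alternative. The concavity/existence step, the Euler--Lagrange variational condition, and the verification via the Stieltjes transform are all correct in spirit. One small wrinkle worth tightening: your tightness argument for existence relies on $V(\lambda)=\lambda-(\theta-1)\log\lambda$ blowing up at $0^+$, which only holds for $\theta>1$; at $\theta=1$ the potential is just $\lambda$ and for $\theta<1$ it goes to $-\infty$ at the origin, so the confinement-at-zero part of the argument needs the case split you mention at the end rather than being deferred to a remark. Since the paper only needs $\theta=1/c\geq 1$, restricting to that regime from the outset is the cleanest fix and matches what the paper actually uses.
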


The functional $\Psi_\theta(\mu)$ can be interpreted as a \emph{free energy}, where $\chi(\mu)$ represents Voiculescu free entropy. The fact that the free Poisson law uniquely maximizes this functional makes it a fundamental object of study, analogous to how the Gaussian distribution maximizes classical entropy for a fixed variance.

\subsection{Maximization Principle for the Mar\v{c}enko-Pastur Distribution}

The maximization principle for the free Poisson distribution can be extended to the Mar\v{c}enko-Pastur distribution. This extension is achieved through a direct scaling relationship.

\begin{proposition}[Maximization Principle for Mar\v{c}enko-Pastur Distribution]
Let $\nu$ be a probability measure on $\mathbb{R}$ and let $c > 0$ be a parameter. Define the functional $\Phi_c(\nu)$ as:
$$ \Phi_c(\nu) = \chi(\nu) - \int \left(\frac{\lambda}{c} - \left(\frac{1}{c}-1\right)\log(\lambda)\right)d\nu(\lambda) $$
The unique probability measure that maximizes the functional $\Psi_c(\nu)$ is the Mar\v{c}enko-Pastur distribution, $\mu_{c}^{\textsf{M-P}}$.
\end{proposition}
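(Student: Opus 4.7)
The plan is to derive the Mar\v{c}enko-Pastur maximization principle from the free Poisson one (Proposition~\ref{prop:free}) via a linear rescaling, since $\mu^{\textsf{M-P}}_c$ is (up to a constant) the push-forward of $\mu^{\textsf{F-P}}_{1/c}$ under $y \mapsto c y$. First I would verify this push-forward identification by a direct density calculation: starting from the free Poisson density with parameter $\theta = 1/c$ supported on $[(1-1/\sqrt{c})^2,(1+1/\sqrt{c})^2]$, substitute $z = cy$, track the Jacobian, and simplify $(1 \pm 1/\sqrt{c})^2 = ((1\pm\sqrt{c})^2)/c$ to recover the Mar\v{c}enko-Pastur density in \eqref{eq:mandp} with support $[(1-\sqrt{c})^2,(1+\sqrt{c})^2]$ and the correct $1/(2\pi c z)$ prefactor.

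Next I would pull the functional $\Phi_c(\nu)$ back through the scaling $T: y \mapsto cy$. For the Voiculescu entropy, the double integral transforms as
\begin{equation*}
\chi(T_*\mu) \;=\; \iint \log|c y - c\tilde y|\, d\mu(y)\,d\mu(\tilde y) \;=\; \chi(\mu) + \log c,
\end{equation*}
and for the potential term, setting $\theta = 1/c$,
\begin{equation*}
\int\!\Big(\tfrac{z}{c} - (\tfrac{1}{c}-1)\log z\Big)\, d(T_*\mu)(z) = \int\!\big(y - (\theta-1)\log y\big)\, d\mu(y) + (\theta-1)\log c.
\end{equation*}
Combining these two identities gives $\Phi_c(T_*\mu) = \Psi_{1/c}(\mu) + K(c)$, where $K(c)$ depends only on $c$ and not on $\mu$. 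Thus, up to an additive constant, the Mar\v{c}enko-Pastur variational functional coincides with the free Poisson variational functional transported through $T$.

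Finally, since $T_*$ is a bijection on probability measures on $(0,\infty)$, maximizing $\Phi_c(\nu)$ over $\nu$ is equivalent to maximizing $\Psi_{1/c}(\mu)$ over $\mu$. By Proposition~\ref{prop:free}, the unique maximizer on the right-hand side is $\mu = \mu^{\textsf{F-P}}_{1/c}$, so the unique maximizer on the left-hand side is $\nu = T_*\mu^{\textsf{F-P}}_{1/c}$, which by the first step equals $\mu^{\textsf{M-P}}_c$. This establishes the claim.

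The main obstacle is purely bookkeeping: the constants arising from the Jacobian of $T$ in the double integral defining $\chi$ and from the $\log(cy) = \log c + \log y$ split in the potential must cancel cleanly into a $\nu$-independent constant $K(c)$, so that maximization is preserved. Once that is verified, the conclusion follows immediately by invoking Proposition~\ref{prop:free}. I would also briefly note that both functionals are implicitly defined on measures supported on $(0,\infty)$ to avoid issues with $\log \lambda$, and that $T$ preserves this class.
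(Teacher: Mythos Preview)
Your proposal is correct and follows essentially the same approach as the paper: both arguments reduce the Mar\v{c}enko--Pastur functional to the free Poisson functional via the scaling map $y\mapsto cy$ (equivalently $\xi\mapsto\xi/\theta$ with $\theta=1/c$), show the two functionals differ by a measure-independent constant, and then invoke Proposition~\ref{prop:free}. Your version is slightly more detailed in that you explicitly verify the density identification $T_*\mu^{\textsf{F-P}}_{1/c}=\mu^{\textsf{M-P}}_c$, which the paper simply asserts; note a harmless sign slip in your displayed potential term (it should be $-(\theta-1)\log c$), but as you say this is absorbed into $K(c)$ and does not affect the argument.
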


\begin{proof}
 Let $\mu$ be any measure, define $\nu$ as the pushforward of $\mu$, under that map $T(\xi) = \xi/\theta =: \lambda$. Then we see that for any measure $A$, we have that 
\[
    \nu(A) = \mu\left(T^{-1}(A)\right) = \mu\left(\theta A\right)
\]
Note that $\xi = \lambda \theta$, and 
\begin{align*}
    \chi(\mu) &= \iint \log|\xi - \tilde{\xi}|\,d\mu(\xi)d\mu(\tilde{\xi}) \\
    &= \iint \log|\lambda \theta - \tilde{\lambda}\theta |\,d\nu(\lambda)d\nu(\tilde{\lambda}) \\
    &= \underbrace{\iint \log|\lambda - \tilde{\lambda}|\,d\nu(\lambda)d\nu(\tilde{\lambda})}_{\chi(\nu)} + \log(\theta)
\end{align*}
and
\begin{align*}
    \int \left(\xi - (\theta-1)\log(\xi)\right)d\mu(\xi) &= \int \left(\lambda \theta - (\theta-1)\log(\lambda \theta)\right)d\nu(\lambda) \\
    &= \int \left(\lambda\theta - (\theta-1)\log(\lambda)\right)d\nu(\lambda) \\
    &\quad - (\theta-1)\log(\theta)
\end{align*}
Adding the two and recalling that $c = \frac{1}{\theta}$ we see that
\begin{align*}
    \Phi_c(\nu) &= \chi(\nu) - \int \left(\frac{\lambda}{c} - \left(\frac{1}{c}-1\right)\log(\lambda)\right)d\nu(\lambda) \\
    &= \chi(\nu) - \int \left(\lambda\theta - (\theta-1)\log(\lambda)\right)d\nu(\lambda) \\
    &= \chi(\mu)  - \log(\theta)\\
    & \quad - \left[\int \left(\xi - (\theta-1)\log(\xi)\right)d\mu(\xi) + (\theta - 1)\log(\theta)\right]\\
    &= \Psi_\theta(\mu) - \theta \log(\theta)
\end{align*}
For the purpose of maximization with respect to the measure $\nu$, the constant term $\theta\log(\theta)$ can be disregarded. 

From Proposition~\ref{prop:free}, we now that $\mu_\theta^{\textsf{F-P}}$ is the unique maximizer of $\Psi_\theta$. Thus, the pushforward $\mu_\theta^{\textsf{F-P}}$ under the map $T(\xi) = \xi/\theta$ is the unique maximizer of $\Phi_c(\nu)$. Since this pushforward is exactly $\mu_c^{\textsf{M-P}}$, we get the needed result. 
\end{proof}

Note that this loss can be easily implemented as follows in PyTorch. 

{\centering
\begin{lstlisting}[style=py, caption={PyTorch code for computing the matricial free energy and the free loss as in (\ref{eq:freeloss-normal}) and (\ref{eq:free loss}), respectively.}, label={lst:free-energy}]
def g(svals_sq, c):  
    d = svals_sq.shape[0]
    return (svals_sq.sum()/d - (1/c - 1) * torch.log(svals_sq)).sum() / d

def matricial_free_energy(Y):
   if isinstance(Y, torch.Tensor):
       # Y is a PyTorch tensor stored as batch_size x features matrix when obtained through a PyTorch model
       b, d = Y.shape[0], Y.shape[1] 
   else: # if a matrix in math form (i.e. not a PyTorch tensor) as a features x batch_size matrix
        d, b = Y.shape[0], Y.shape[1] 
        Y = torch.tensor(Y)
   assert d < b, f"Expected d < b, but got d={d} and b={b}"
   svals_sq = torch.svd(Y) ** 2
    
   # Note:
   # pdist returns a 1D tensor containing the upper triangular part (excluding the diagonal) of the distance matrix. 
   # So for an N dimensional vectors, you get N*(N-1)/2 distances  - to compute chiX we need to double it

   chi_Y =  2 * torch.log(torch.pdist(svals_sq.view(d, 1), p = 1)).sum() / (d * (d - 1))
   Phi_c_Y = chi_Y - g(svals_sq, d/b)
   return Phi_c_Y

def free_loss(Y):
     return -matricial_free_energy(Y)
\end{lstlisting}
}

\FloatBarrier

\section{Optimization Dynamics}
\label{app:dynamics}

Figures~\ref{fig:dynamics-train} and \ref{fig:dynamics-test} show how the histogram of the entries, the qq-plot of the entries and the singular value distribution for the code matrix for a single batch of data evolves during training. Here we see that the even though the Free Loss was close to the theoretical minimum after 100 epochs, at that point, the code matrix is not Gaussian. This only occurs, mucn later in training. 

Finally, Figure~\ref{fig:dynamics-joy} shows the joyplot for evolution of the histogram of the entries.

\begin{figure*}[t]
  \centering

  \begin{subfigure}[t]{0.32\textwidth}
    \centering
    \includegraphics[width=\linewidth]{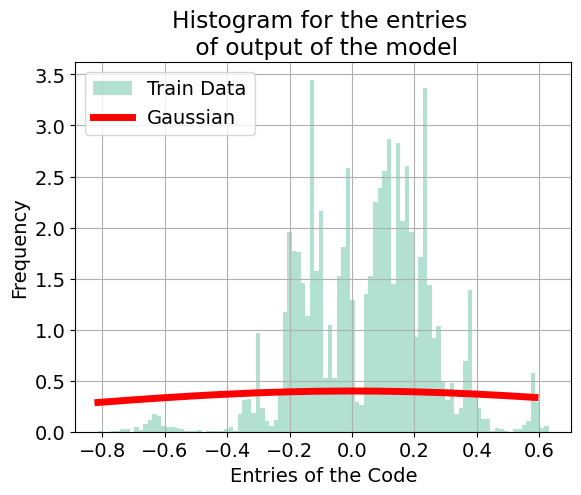}
    \caption{Epoch 0 – Empirical pdf of the entries vs pdf of $\mathcal{N}(0,1)$}\label{fig:hist-0-train}
  \end{subfigure}\hfill
  \begin{subfigure}[t]{0.32\textwidth}
    \centering
    \includegraphics[width=\linewidth]{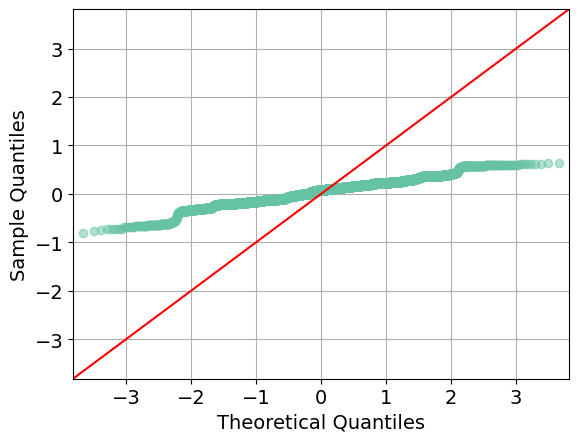}
    \caption{Epoch 0 – Q-Q plot relative to a standard $\mathcal{N}(0,1)$ distribution}\label{fig:qq-0-train}
  \end{subfigure}\hfill
  \begin{subfigure}[t]{0.32\textwidth}
    \centering
    \includegraphics[width=\linewidth]{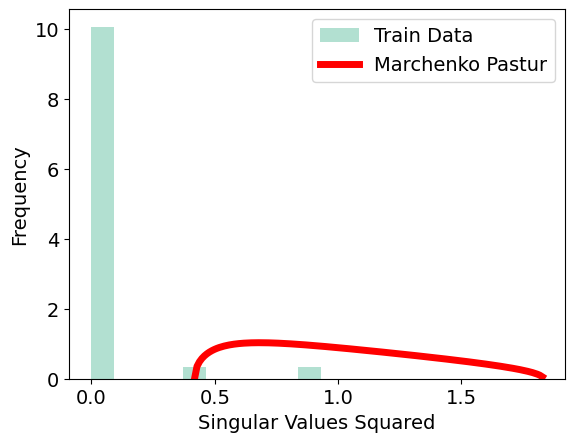}
    \caption{Epoch 0 – Distribution of sample covariance matrix eigenvalues}\label{fig:svd-0-train}
  \end{subfigure}

  \par\medskip

  \begin{subfigure}[t]{0.32\textwidth}
    \centering
    \includegraphics[width=\linewidth]{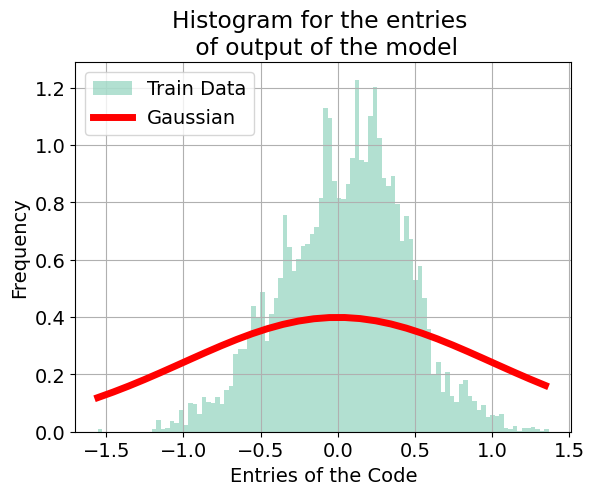}
    \caption{Epoch 10 – Empirical pdf of the entries vs pdf of $\mathcal{N}(0,1)$}\label{fig:hist-10-train}
  \end{subfigure}\hfill
  \begin{subfigure}[t]{0.32\textwidth}
    \centering
    \includegraphics[width=\linewidth]{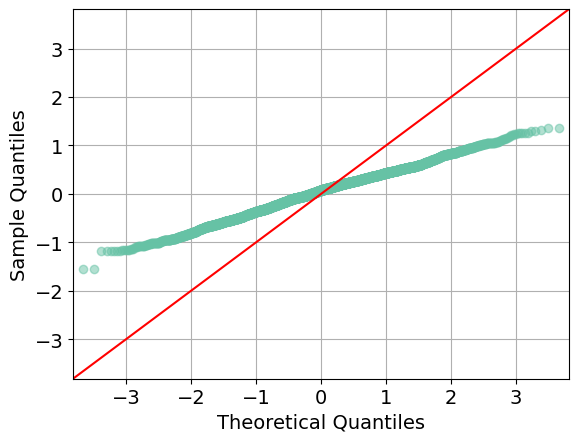}
    \caption{Epoch 10 – Q-Q plot relative to a standard $\mathcal{N}(0,1)$ distribution}\label{fig:qq-10-train}
  \end{subfigure}\hfill
  \begin{subfigure}[t]{0.32\textwidth}
    \centering
    \includegraphics[width=\linewidth]{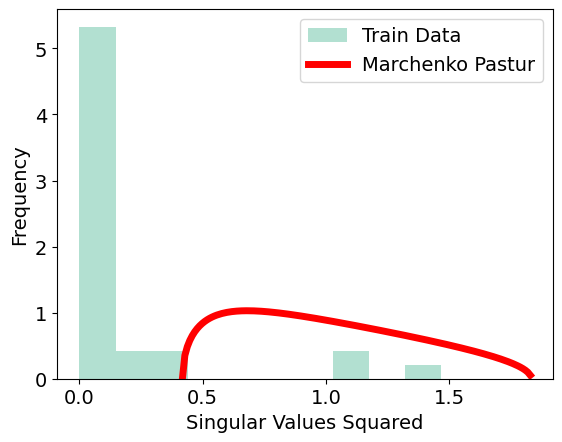}
    \caption{Epoch 10 – Distribution of sample covariance matrix eigenvalues}\label{fig:svd-10-train}
  \end{subfigure}

  \par\medskip

  \begin{subfigure}[t]{0.32\textwidth}
    \centering
    \includegraphics[width=\linewidth]{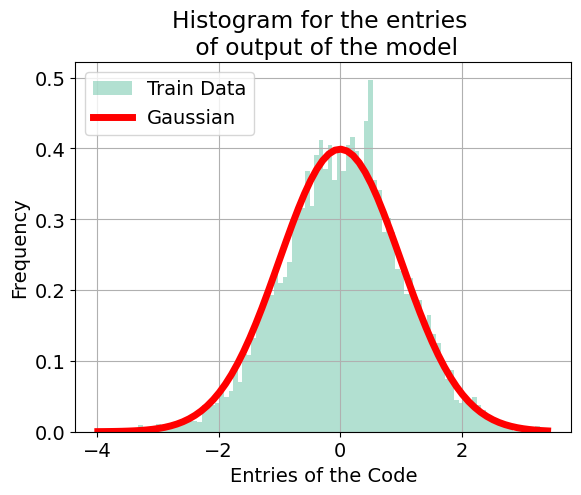}
    \caption{Epoch 100 – Empirical pdf of the entries vs pdf of $\mathcal{N}(0,1)$}\label{fig:hist-100-train}
  \end{subfigure}\hfill
  \begin{subfigure}[t]{0.32\textwidth}
    \centering
    \includegraphics[width=\linewidth]{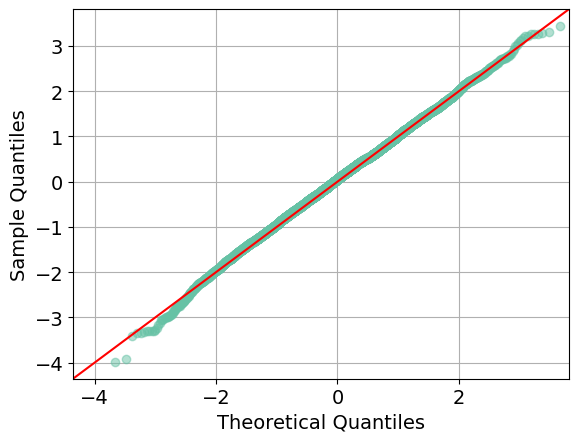}
    \caption{Epoch 100 – Q-Q plot relative to a standard $\mathcal{N}(0,1)$ distribution}\label{fig:qq-100-train}
  \end{subfigure}\hfill
  \begin{subfigure}[t]{0.32\textwidth}
    \centering
    \includegraphics[width=\linewidth]{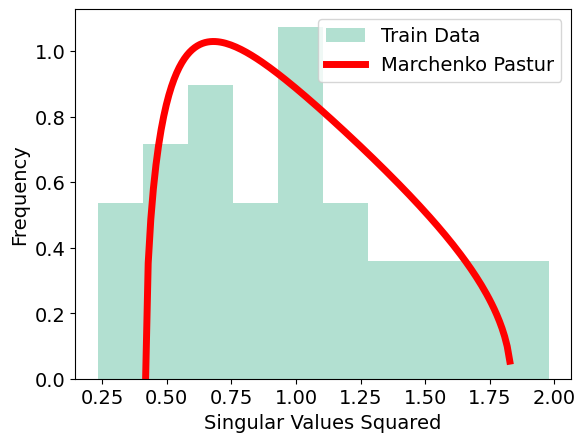}
    \caption{Epoch 100 – Distribution of sample covariance matrix eigenvalues}\label{fig:svd-100-train}
  \end{subfigure}

  \par\medskip

  \begin{subfigure}[t]{0.32\textwidth}
    \centering
    \includegraphics[width=\linewidth]{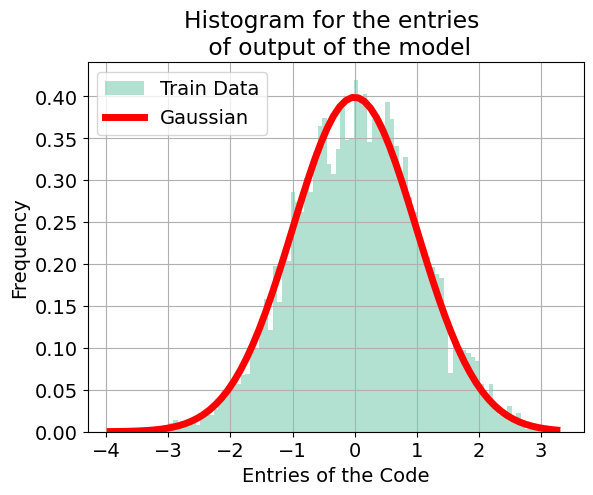}
    \caption{Epoch 1000 – Empirical pdf of the entries vs pdf of $\mathcal{N}(0,1)$}\label{fig:hist-1000-train}
  \end{subfigure}\hfill
  \begin{subfigure}[t]{0.32\textwidth}
    \centering
    \includegraphics[width=\linewidth]{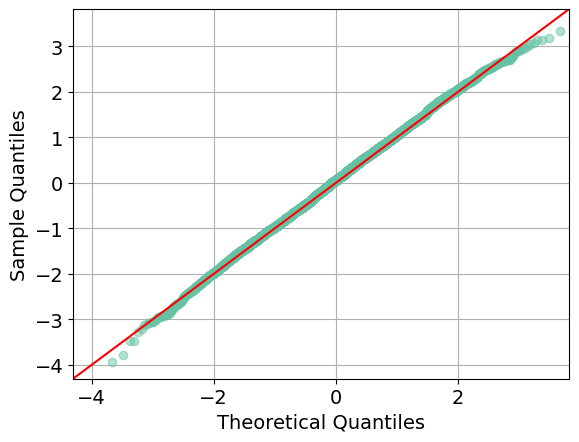}
    \caption{Epoch 1000 – Q-Q plot relative to a standard $\mathcal{N}(0,1)$ distribution}\label{fig:qq-1000-train}
  \end{subfigure}\hfill
  \begin{subfigure}[t]{0.32\textwidth}
    \centering
    \includegraphics[width=\linewidth]{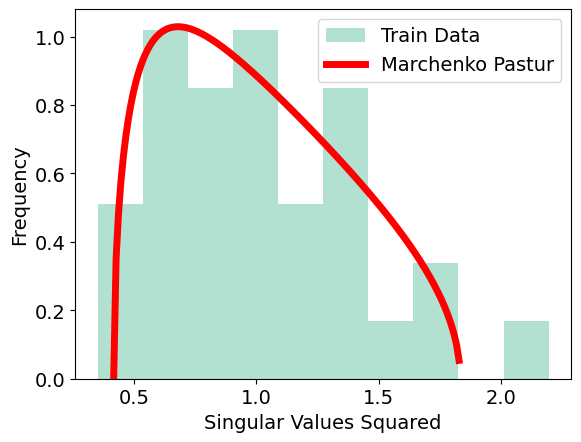}
    \caption{Epoch 1000 – Distribution of sample covariance matrix eigenvalues}\label{fig:svd-1000-train}
  \end{subfigure}

  \caption{Progression of output code distribution during training. Each row shows the histogram, QQ plot, and singular value distribution for different epochs: (a–c) initialization, (d–f) epoch 10, (g–i) epoch 100, and (j–l) epoch 1000. Red lines indicate theoretical densities: standard normal for histogram/QQ and Marchenko–Pastur for singular values with shape $c=32/256$.}
  \label{fig:dynamics-train}
\end{figure*}

\begin{figure*}[t]
  \centering

  \begin{subfigure}[t]{0.32\textwidth}
    \centering
    \includegraphics[width=\linewidth]{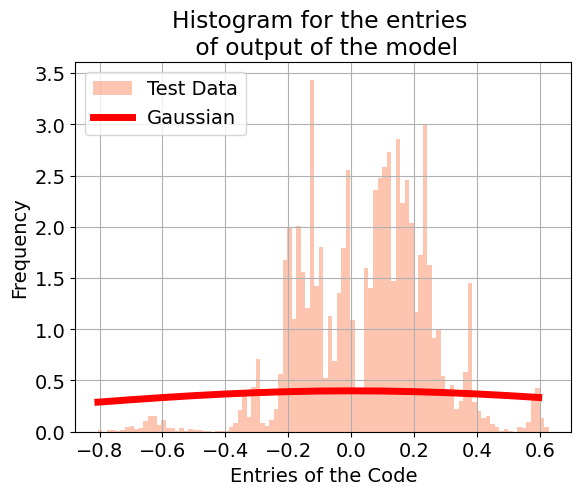}
    \caption{Epoch 0 – Empirical pdf of the entries vs pdf of $\mathcal{N}(0,1)$}\label{fig:hist-0-test}
  \end{subfigure}\hfill
  \begin{subfigure}[t]{0.32\textwidth}
    \centering
    \includegraphics[width=\linewidth]{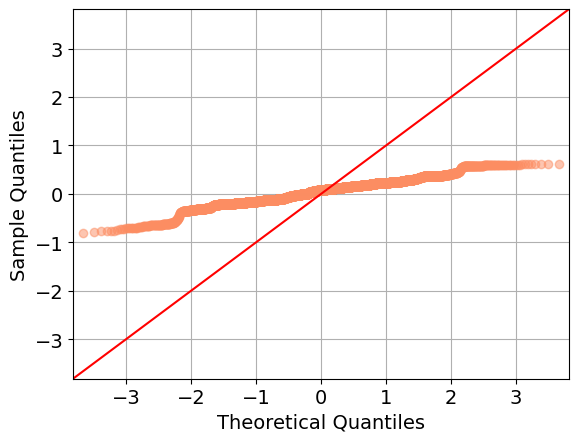}
    \caption{Epoch 0 – Q-Q plot relative to a standard $\mathcal{N}(0,1)$ distribution}\label{fig:qq-0-test}
  \end{subfigure}\hfill
  \begin{subfigure}[t]{0.32\textwidth}
    \centering
    \includegraphics[width=\linewidth]{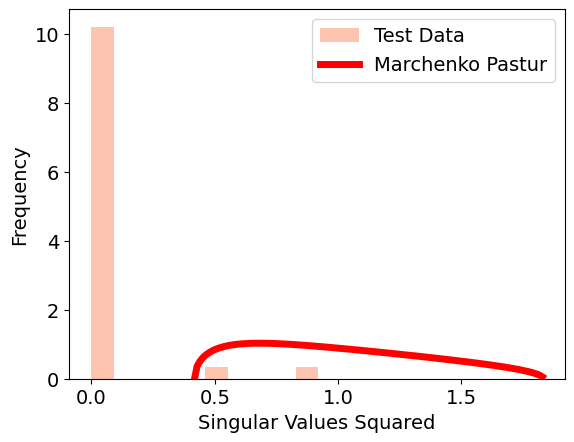}
    \caption{Epoch 0 – Distribution of sample covariance matrix eigenvalues}\label{fig:svd-0-test}
  \end{subfigure}

  \par\medskip

  \begin{subfigure}[t]{0.32\textwidth}
    \centering
    \includegraphics[width=\linewidth]{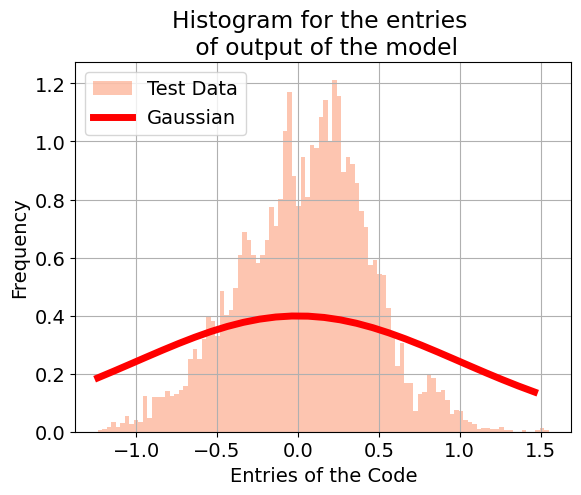}
    \caption{Epoch 10 – Empirical pdf of the entries vs pdf of $\mathcal{N}(0,1)$}\label{fig:hist-10-test}
  \end{subfigure}\hfill
  \begin{subfigure}[t]{0.32\textwidth}
    \centering
    \includegraphics[width=\linewidth]{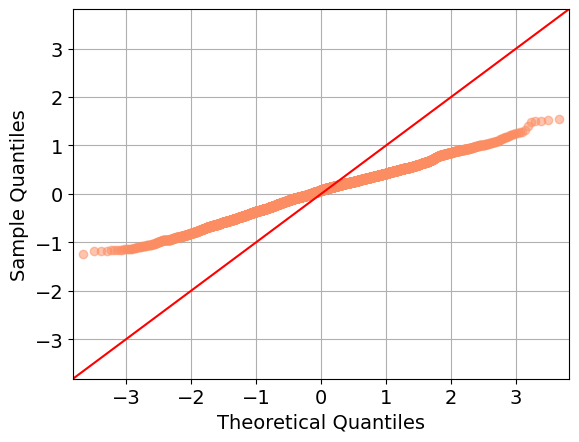}
    \caption{Epoch 10 – Q-Q plot relative to a standard $\mathcal{N}(0,1)$ distribution}\label{fig:qq-10-test}
  \end{subfigure}\hfill
  \begin{subfigure}[t]{0.32\textwidth}
    \centering
    \includegraphics[width=\linewidth]{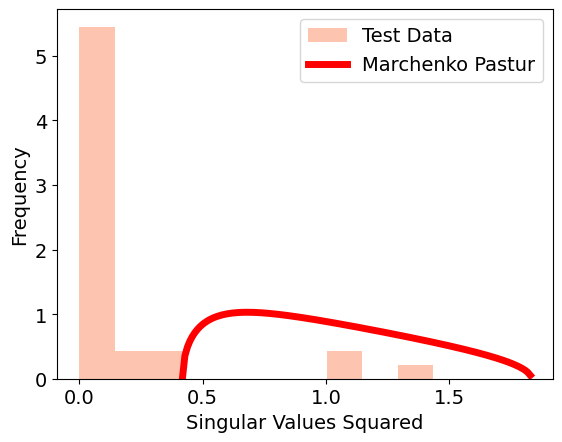}
    \caption{Epoch 10 – Distribution of sample covariance matrix eigenvalues}\label{fig:svd-10-test}
  \end{subfigure}

  \par\medskip

  \begin{subfigure}[t]{0.32\textwidth}
    \centering
    \includegraphics[width=\linewidth]{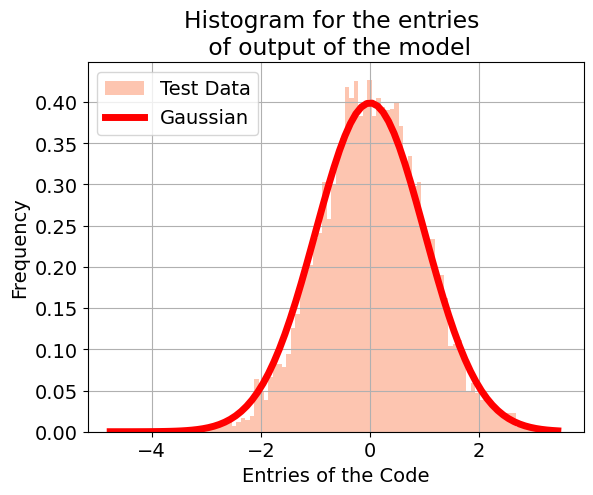}
    \caption{Epoch 100 – Empirical pdf of the entries vs pdf of $\mathcal{N}(0,1)$}\label{fig:hist-100-test}
  \end{subfigure}\hfill
  \begin{subfigure}[t]{0.32\textwidth}
    \centering
    \includegraphics[width=\linewidth]{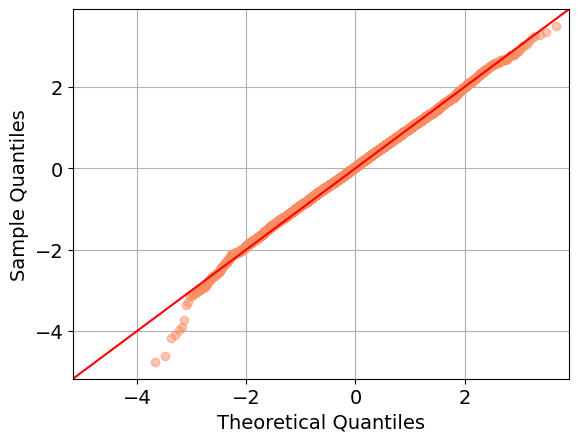}
    \caption{Epoch 100 – Q-Q plot relative to a standard $\mathcal{N}(0,1)$ distribution}\label{fig:qq-100-test}
  \end{subfigure}\hfill
  \begin{subfigure}[t]{0.32\textwidth}
    \centering
    \includegraphics[width=\linewidth]{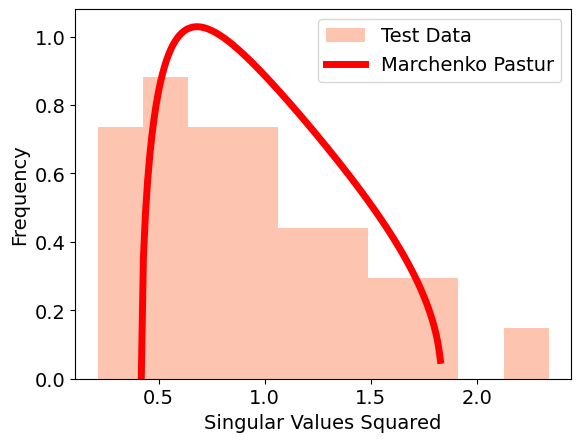}
    \caption{Epoch 100 – Distribution of sample covariance matrix eigenvalues}\label{fig:svd-100-test}
  \end{subfigure}

  \par\medskip

  \begin{subfigure}[t]{0.32\textwidth}
    \centering
    \includegraphics[width=\linewidth]{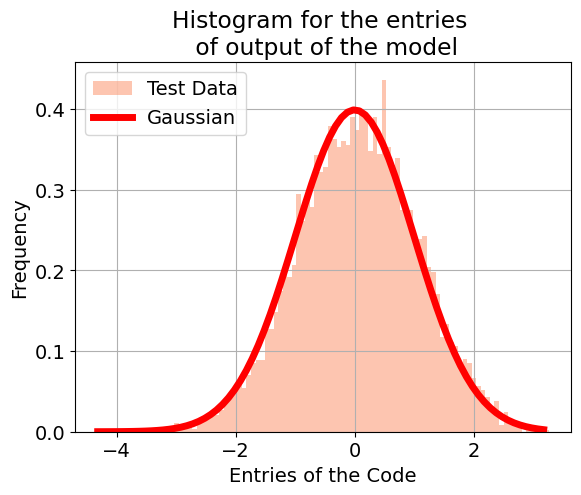}
    \caption{Epoch 1000 – Empirical pdf of the entries vs pdf of $\mathcal{N}(0,1)$}\label{fig:hist-1000-test}
  \end{subfigure}\hfill
  \begin{subfigure}[t]{0.32\textwidth}
    \centering
    \includegraphics[width=\linewidth]{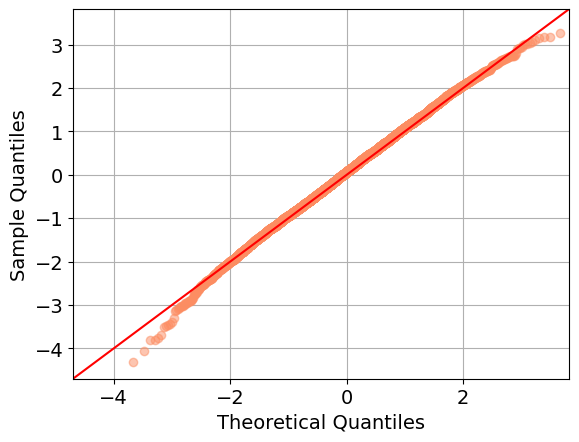}
    \caption{Epoch 1000 – Q-Q plot relative to a standard $\mathcal{N}(0,1)$ distribution}\label{fig:qq-1000-test}
  \end{subfigure}\hfill
  \begin{subfigure}[t]{0.32\textwidth}
    \centering
    \includegraphics[width=\linewidth]{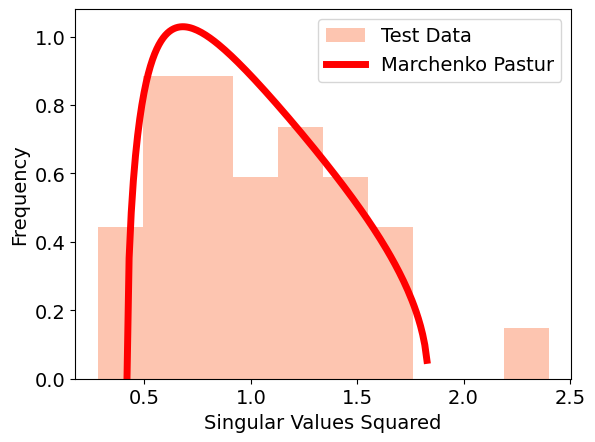}
    \caption{Epoch 1000 – Distribution of sample covariance matrix eigenvalues}\label{fig:svd-1000-test}
  \end{subfigure}

  \caption{Progression of output code distribution for test data across epochs. Each row shows the histogram, QQ plot, and singular value distribution: (a–c) initialization, (d–f) epoch 10, (g–i) epoch 100, and (j–l) epoch 1000. Red lines indicate theoretical densities: standard normal (hist/QQ) and Mar\v{c}enko–Pastur (singular values) with shape $c=32/256$.}
  \label{fig:dynamics-test}
\end{figure*}

\FloatBarrier

\begin{figure*}[!ht]
  \centering
  \begin{subfigure}[t]{0.49\textwidth}
    \centering
    \includegraphics[width=\linewidth]{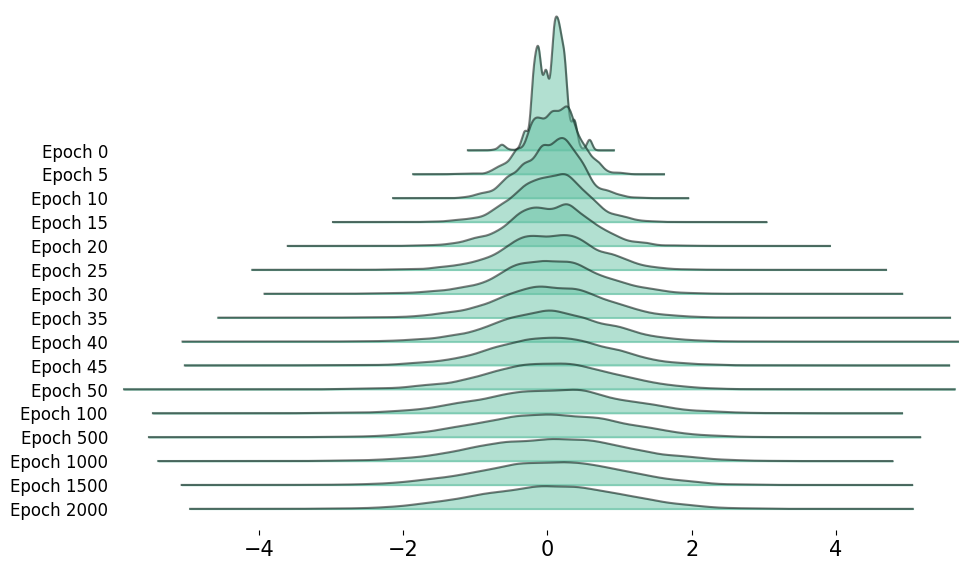}
    \caption{Train Histogram}\label{fig:joy-train}
  \end{subfigure}\hfill
  \begin{subfigure}[t]{0.49\textwidth}
    \centering
    \includegraphics[width=\linewidth]{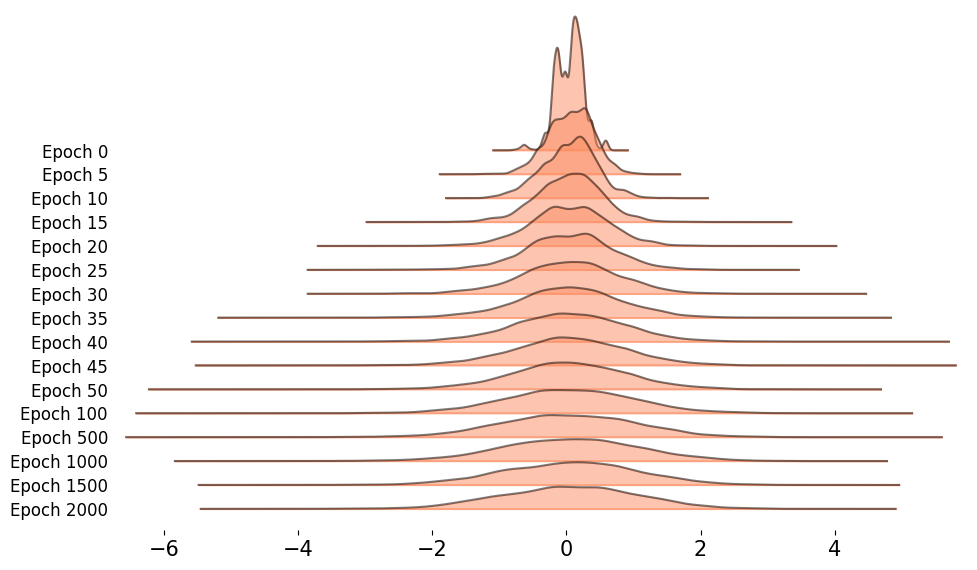}
    \caption{Test Histogram}\label{fig:joy-test}
  \end{subfigure}
  \caption{Histogram for entries of a code matrix over training. Initially non-Gaussian, the shape becomes more Gaussian with training; variance is initially too large and then contracts after $\sim$50 epochs toward the correct moments.}
  \label{fig:dynamics-joy}
\end{figure*}

\section{Batch Size and Dimension}
\label{app:batchsize}

We evaluate Gaussianization across batch sizes $b\!\in\!\{64,128,256,512\}$ and latent dimensions $d\!\in\!\{2,4,8,16,32\}$ with $5$ independent trials per setting.
For each trained model we report:
(i) the Kolmogorov–Smirnov statistic \textbf{KS} on the flattened codes;
(ii) the relative excess OT cost $\Delta_{\texttt{OT}}$ in \eqref{eq:ot}; and
(iii) the relative deviation of the matricial Free Loss
$\mathrm{RelErr}_{\text{free}}
=\big|\big(\mathcal{L}_{\textrm{free}}(Z)-\mathcal{L}_{\textrm{free}}(G)\big)\big/\mathcal{L}_{\textrm{free}}(G)\big|,$
with $Z\in\mathbb{R}^{d\times b}$ and $G$ i.i.d.\ Gaussian of matching shape. We plot the quantities versus the batch size in Figure~\ref{fig:batch-panels-app} and versus dimension in Figure~\ref{fig:dim-panels-app}.
All error bars are \emph{±\,Standard Mean Error} across the 5 trials. 

\begin{figure*}[t]
  \centering
  \begin{subfigure}[t]{0.32\linewidth}
    \centering
    \includegraphics[width=\linewidth]{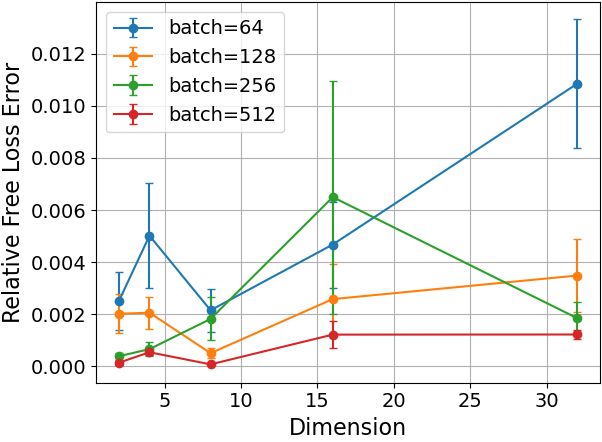}
    \caption{\textbf{$\mathrm{RelErr}_{\text{free}}$ vs.\ $d$.}
    Values are small across all $b$.}
    \label{fig:free-vs-dim-app}
  \end{subfigure}\hfill
  \begin{subfigure}[t]{0.32\linewidth}
    \centering
    \includegraphics[width=\linewidth]{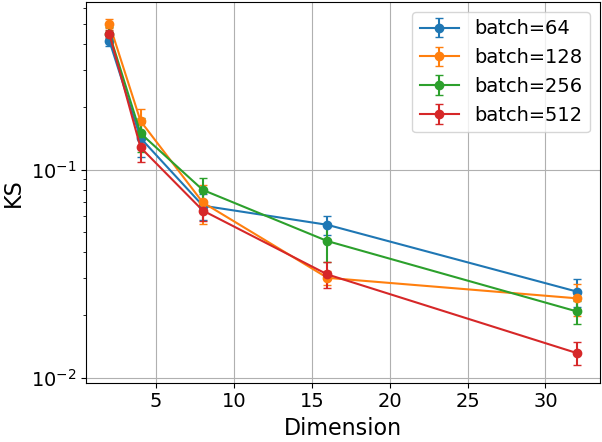}
    \caption{\textbf{KS vs.\ $d$.}
    KS drops sharply as dimension increases.}
    \label{fig:ks-vs-dim-app}
  \end{subfigure}\hfill
  \begin{subfigure}[t]{0.32\linewidth}
    \centering
    \includegraphics[width=\linewidth]{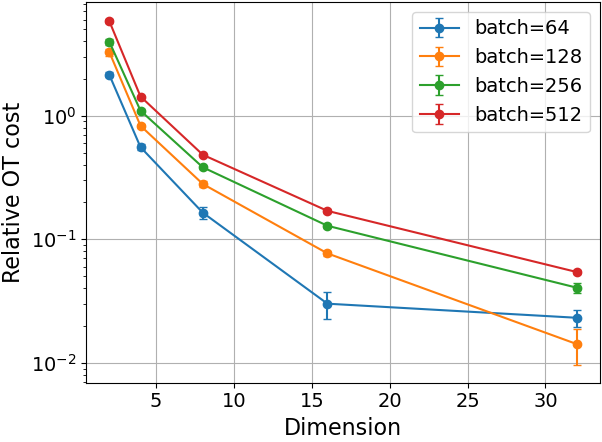}
    \caption{\textbf{$\Delta_{\texttt{OT}}$ vs.\ $d$ (log $y$).} Relative OT cost drops sharply as dimension increases.}
    \label{fig:ot-vs-dim-app}
  \end{subfigure}
  \caption{\textbf{Scaling with latent dimension $d$.}
  Each curve fixes a batch size $b$ and varies $d$.
  Error bars show \emph{±\,SEM} across 5 trials. See Section~\ref{app:batchsize} for more details.}
  \label{fig:dim-panels-app}
\end{figure*}

\begin{figure*}[t]
  \centering
  \begin{subfigure}[t]{0.32\linewidth}
    \centering
    \includegraphics[width=\linewidth]{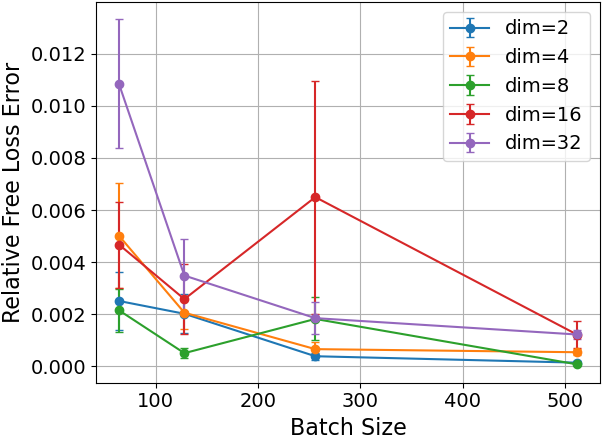}
    \caption{\textbf{$\mathrm{RelErr}_{\text{free}}$ vs.\ $b$.}
    Means remain small for all batch sizes.}
    \label{fig:free-vs-batch-app}
  \end{subfigure}\hfill
  \begin{subfigure}[t]{0.32\linewidth}
    \centering
    \includegraphics[width=\linewidth]{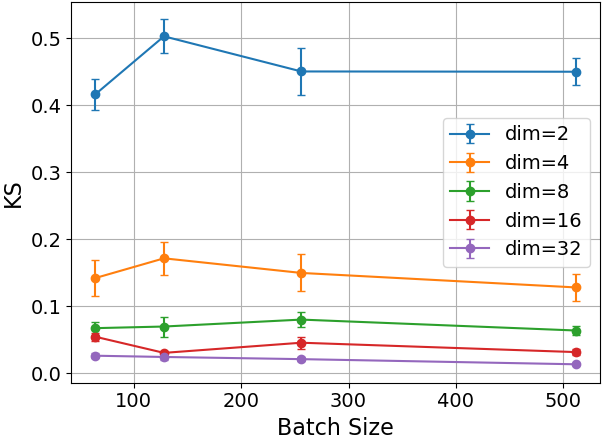}
    \caption{\textbf{KS vs.\ $b$.} KS stays flat, as a function of batch size. }
    \label{fig:ks-vs-batch-app}
  \end{subfigure}\hfill
  \begin{subfigure}[t]{0.32\linewidth}
    \centering
    \includegraphics[width=\linewidth]{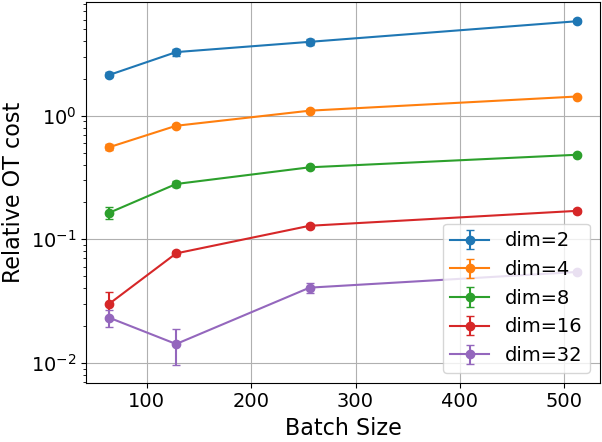}
    \caption{\textbf{$\Delta_{\texttt{OT}}$ vs.\ $b$ (log $y$).}
    $\Delta_{\texttt{OT}}$ tends to increase with $b$.}
    \label{fig:ot-vs-batch-app}
  \end{subfigure}
  \caption{\textbf{Scaling with batch size $b$.}
  Each curve fixes a dimension $d$ and varies $b$.
  Error bars show \emph{±\,SEM} across 5 trials. See Section~\ref{app:batchsize} for more details.}
  \label{fig:batch-panels-app}
\end{figure*}

\paragraph{Takeaways.}
\begin{enumerate}[nosep, itemsep = 2pt]
    \item Both KS and $\Delta_{\texttt{OT}}$ improve rapidly with $d$.
    \item Batch size $b$ has a secondary but visible effect on $\Delta_{\texttt{OT}}$ at low $d$.
    \item $\mathrm{RelErr}_{\text{free}}$ is small throughout.
\end{enumerate}

\section{Real Data}
\label{app:real}

We describe the encoders used to produce \(d\)-dimensional codes that are trained using Free Loss \eqref{eq:freeloss-normal}. We train with Adam (lr = \(10^{-3}\), \(\beta{=}(0.9,0.999)\)), and standard data shuffling each epoch. 

To highlight the effectiveness of using Free Loss, we note that no extra engineering went into the design of these networks. They were generically chosen as reasonable models recent models for each data modality. 

\subsection{Audio Encoders }
\textbf{Front-end.} We form 128-bin log-Mel spectrograms. The resulting input data tensors are shaped as \texttt{(batch, freq=128, time)}.

\textbf{Front-end.} We form 128-bin log-Mel spectrograms. The resulting tensors are shaped as \texttt{(batch, freq=128, time)}.

\textbf{Backbone.} A compact Conformer \citep{gulati2020conformer} encoder with the following parameters: \texttt{d\_model = 64}, \texttt{num\_blocks = 2}, \texttt{nhead = 1}, \texttt{dim\_feedforward = 64}, depthwise convolution \texttt{kernel\_size = 31} (odd; same-padding), \texttt{dropout = 0}.

Each Conformer block follows this structure: $\tfrac{1}{2}$ Feed-Forward Network (FFN) $\to$ Multi-Head Self-Attention (MHSA) $\to$ Conv-module $\to$ $\tfrac{1}{2}$FFN, with residuals and a final LayerNorm. In equation form:
\begin{equation*}
\begin{aligned}
& x \leftarrow x + \tfrac{1}{2}\mathrm{FFN}(x); & x \leftarrow x + \mathrm{MHSA}(x); \\
& x \leftarrow x + \mathrm{Conv}(x); & x \leftarrow x + \tfrac{1}{2}\mathrm{FFN}(x); \\
& x \leftarrow \mathrm{LN}(x).
\end{aligned}
\end{equation*}
The FFN uses GLU gating and SiLU activations. The Conv-module is: Pointwise-conv $\to$ Depthwise 1D conv (groups = channels) $\to$ Batch-Norm $\to$ SiLU $\to$ Pointwise-conv. This is applied on tensors shaped \texttt{(batch, time, dim)}, with necessary permutations between time and channel dimensions.

\textbf{Pooling.} Attention pooling over the time dimension produces a single vector shaped \texttt{(batch, 64)}.

\textbf{Head.} A linear layer from 64 to 32 dimensions to obtain the embedding $z \in \mathbb{R}^{32}$.

\textbf{Batches.} Batch size $b = 64$.

\subsection{Text: Transformer Encoder}

We use the encoder transformer \citep{vaswani2017attention}.

\textbf{Tokens.} Vocabulary size $|V| = \texttt{vocab\_size}$ from the data loader. Sequences are padded or truncated to a fixed maximum length.

\textbf{Backbone.} PyTorch \texttt{nn.TransformerEncoder} with the following parameters: \texttt{d\_model = 128}, \texttt{nhead = 2}, \texttt{num\_encoder\_layers = 3}, \texttt{dim\_feedforward = 128}, \texttt{dropout = 0}, \texttt{batch\_first = True}.

Each encoder layer consists of Multi-Head Self-Attention (MHSA) followed by a Feed-Forward Network (FFN), with residuals and layer normalization. Token embeddings are 128-dimensional and scaled by \( \sqrt{d_{\text{model}}} \). We add sine–cosine positional encodings.

\textbf{Pooling.} A learned attention pooling over the sequence returns a single vector shaped \texttt{(batch, 128)}.

\textbf{Head.} A linear layer from 128 to 32 dimensions to obtain the embedding \( z \in \mathbb{R}^{32} \).

\textbf{Batches.} Batch size \( b = 128 \).

\subsection{Vision: EfficientViT-M2.}

We use EfficientViT from \citep{liu2023efficientvit}.

\textbf{Input.} RGB images resized to \( 224 \times 224 \); per-channel normalization.

\textbf{Backbone.} \texttt{vit.get\_embedder("efficient")} wraps a TIMM EfficientViT-M2 backbone with \texttt{pretrained=False}, drops the classifier (\texttt{num\_classes=0}), and adds a linear projector yielding a 100-dimensional embedding. A linear layer from 100 to 32 dimensions to obtain the embedding \( z \in \mathbb{R}^{32} \).

\textbf{Batches.} Batch size \( b = 128 \).

\subsection{Plots}
\label{sec:real-plots}

We the equivalent of Figure~\ref{fig:2} for an example dataset from each modality. MNIST for image, GTZAN for audio, and IMDB for text. 

\FloatBarrier
  
\MakeEncoderEvolveFig{MNIST}{Encoder (MNIST)}{25}{$c=32/128$}
\MakeEncoderEvolveFig{IMDB}{Encoder (IMDB)}{25}{$c=32/128$}
\MakeEncoderEvolveFig{GTZAN}{Encoder (GTZAN)}{25}{$c=32/64$}




\FloatBarrier

\begin{figure*}[!h]
  \centering
  \begin{subfigure}[t]{0.24\textwidth}
    \centering
    \includegraphics[width=\linewidth]{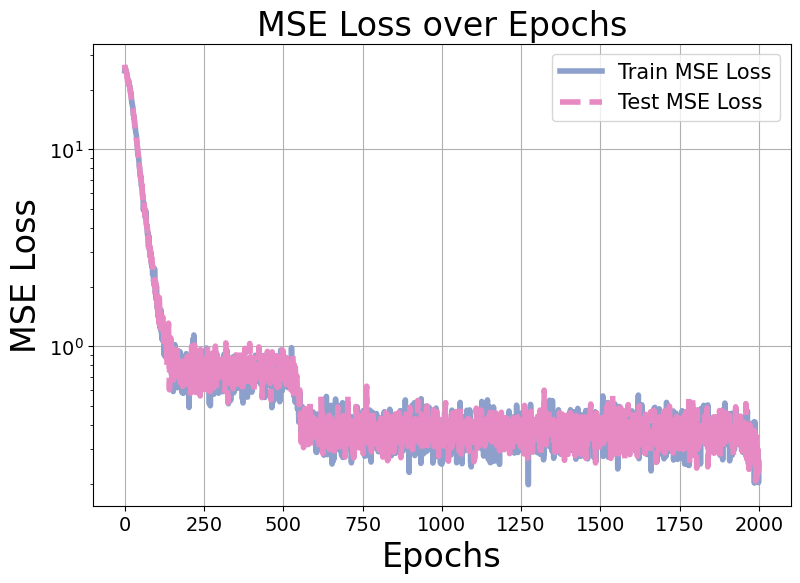}
    \caption{$\tau = 0$ MSE}
    \label{fig:mse-tau0}
  \end{subfigure}\hfill
  \begin{subfigure}[t]{0.24\textwidth}
    \centering
    \includegraphics[width=\linewidth]{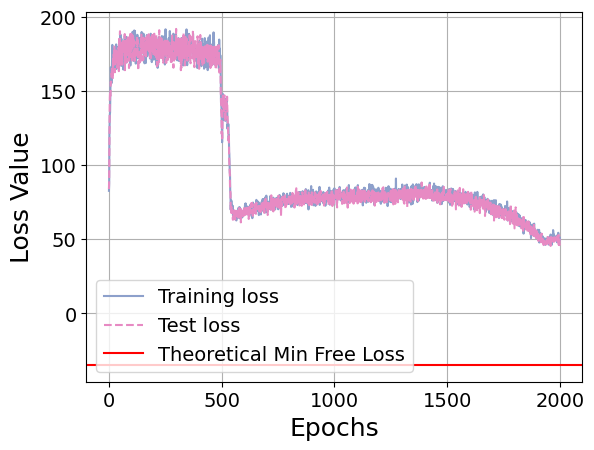}
    \caption{$\tau = 0$ Free Loss}
    \label{fig:free-tau0}
  \end{subfigure}\hfill
  \begin{subfigure}[t]{0.24\textwidth}
    \centering
    \includegraphics[width=\linewidth]{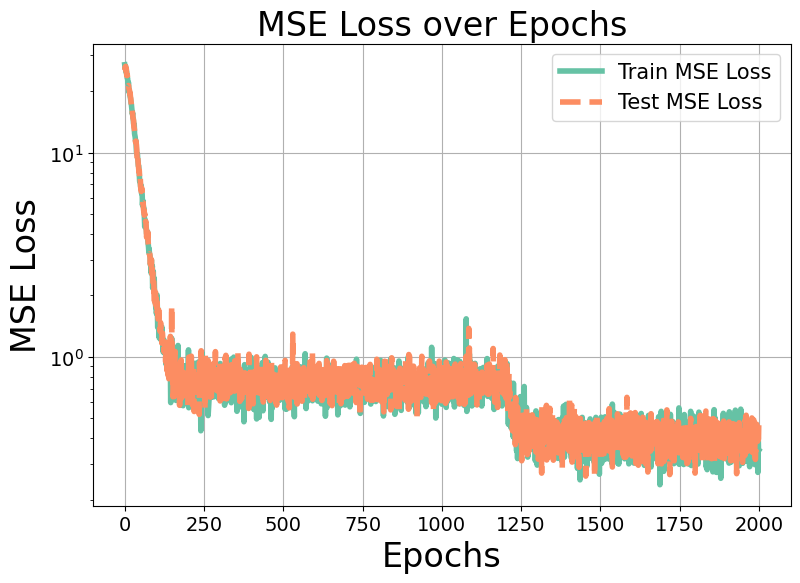}
    \caption{$\tau = 1$ MSE}
    \label{fig:mse-tau1}
  \end{subfigure}\hfill
  \begin{subfigure}[t]{0.24\textwidth}
    \centering
    \includegraphics[width=\linewidth]{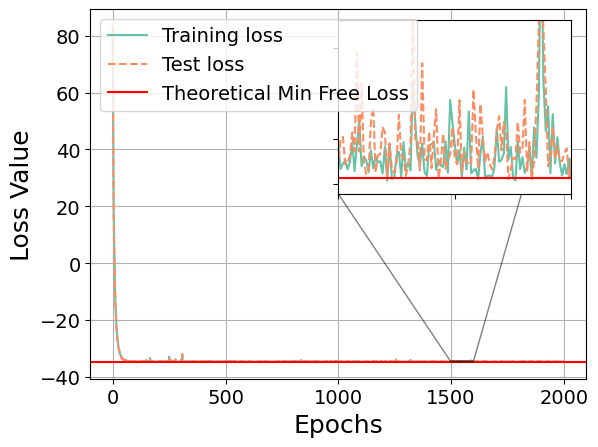}
    \caption{$\tau = 1$ Free Loss}
    \label{fig:free-tau1}
  \end{subfigure}
  \caption{MSE and Free Loss for training an autoencoder with loss $\mathcal{L}_{\text{free}} + \tau\cdot\text{MSE}$, for $\tau=0$ and $\tau=1$.}
  \label{fig:autoencoder-loss}
\end{figure*}

\section{Autoencoder}
\label{app:autoencoder}

\subsection{Chi Squared Data Mixture}
\label{sec:E1}

We begin by plotting the training error curves for the Free Loss regularized autoencoder and the unregularized autoencoder. These can be see in Figure~\ref{fig:autoencoder-loss}. As we can see from the figure. The Free Loss regularized autoencoder successfully minimizes the MSE and Free Loss. While the unregularized autoencoder, minimizes the MSE, but not the Free Loss. 

Next we explore the Gaussianity metrics, for the Free Loss regularized autoencoder, the unregularized autoencoder, and the Tikhonov regularized autoencoder. These can seen in Figure~\ref{fig:autoencoder}. As we can see the Free Loss version, is the only autoencoder that Gaussianizes the latent code.

\subsection{Real Data}
\label{sec:E2}

We also train autoencoders for real image data. We use the same EfficientNet ViT from before. For the decoder, we use the following SimpleLatentDecoder. This architecture was created by ChatGPT to act a simple decoder.

\textbf{Input.} Latent vector \( z \in \mathbb{R}^{\texttt{embedding\_dim}} \) (e.g., 32).

\textbf{Initial Projection.} A linear layer maps from \texttt{embedding\_dim} to \texttt{base\_ch} \(\times 7 \times 7\), followed by Gaussian Error Linear Unit (GELU) activation. The output is reshaped to \texttt{(batch, base\_ch, 7, 7)}.

\textbf{Mixing at 7x7.} We then perform a 1x1 convolution (pointwise), GroupNorm with 1 group (equivalent to LayerNorm over channels), and GELU activation.

\textbf{Upsampling Blocks.} A series of five UpBlock modules, progressively upsampling the spatial dimensions from $7 \times 7$ to $224 \times 224$ while halving the channels approximately each time: 

\begin{align*}
\texttt{base\_ch} &\to \texttt{base\_ch//2} \\
&\to \texttt{base\_ch//4} \\
&\to \texttt{base\_ch//8} \\
&\to \max(\texttt{base\_ch//16}, 32) \\
&\to \max(\texttt{base\_ch//32}, 32).
\end{align*}

Each UpBlock consists of:
\begin{itemize}
\item Upsampling by a factor of 2 (default: bilinear interpolation).
\item 1x1 projection convolution to output channels.
\item Depthwise 3x3 convolution (groups = channels).
\item GroupNorm with 1 group.
\item GELU activation.
\item Addition of a residual connection from after the projection, plus a Swish-Gated Linear Unit (SwiGLU) 2D module applied to the normalized output.
\end{itemize}

The SwiGLU2D is a minimal MLP over channels using 1x1 convolutions: input projection to twice the expanded channels, split into value and gate, gate passed through Sigmoid Linear Unit (SiLU) and multiplied by value, then output projection back to original channels, with optional dropout.

\textbf{Head.} A final refinement sequence at 224x224: 3x3 convolution (padding=1), GroupNorm with 1 group, GELU, and 1x1 convolution to \texttt{out\_channels} (e.g., 3 for RGB).

We then trained a Free Loss regularized autoencoder for MNIST, CIFAR, CelebA, and Imagenet. For each dataset we used 50,000 training data points, a batch size $b = 128$ and an embedding dimension of $d = 96$. We trained for 50 epochs using Adam with a learning rate of $10^{-3}$. For MNIST and CIFAR we used $\tau = 0.1$ and for CelebA and Imagenet we used $\tau = 0.01$. The deviation from Gaussianity statistics and the MSE can be seen in Table~\ref{tab:auto-real}. Note in all cases, we managed to Gaussianize the code. The equivalent of Figure~\ref{fig:2} for the autoencoder can also be seen in Figure 16 and 17 for MNIST and CIFAR data respectively. 

{
\sisetup{
  table-number-alignment = center,
  round-mode = places,
  round-precision = 4
}
\begin{table*}[t]
\centering
\begin{tabular}{l *{8}{S}}
\toprule
{Dataset} & \multicolumn{2}{c}{Relative MSE} & \multicolumn{2}{c}{Relative Free Loss} & \multicolumn{2}{c}{Relative OT $\Delta_{\texttt{OT}}$} & \multicolumn{2}{c}{KS} \\
\cmidrule(lr){2-3}\cmidrule(lr){4-5}\cmidrule(lr){6-7}\cmidrule(lr){8-9}
{} & {Train} & {Test} & {Train} & {Test} & {Train} & {Test} & {Train} & {Test} \\
\midrule
CIFAR
  & 0.0407 & 0.0730
  & 0.0016868897946551442 & 0.0008239990565925837
  & 0.0025621536187827587 & 0.003994377795606852
  & 0.007957901320060135 & 0.011370909747841385 \\
MNIST
  & 0.0072 & 0.001
  & 0.0034935344010591507 & 0.007954721339046955
  & 0.005482370033860207 & 0.0023901890963315964
  & 0.015024904319962895 & 0.012181615705468107 \\
CelebA
  & 0.0918 & 0.1057
  & 0.00196371553465724 & 0.019124414771795273
  & 0.008947089314460754 & 0.01956791616976261
  & 0.0074573610268383606 & 0.010874123246603029 \\
\bottomrule
\end{tabular}
\caption{Training and test Relative Mean Squared Error $\|\mathcal{D}(\mathcal{E}(X))-X\|_F^2/\|X\|_F^2$ and Gaussianization deviation statistics per dataset.}
\label{tab:auto-real}
\end{table*}
} 

\subsection{More Challenging Real Data}
\label{sec:E3}

We end with a note that some datasets are more challenging to auto-encode with latent Gaussian codes. For example, if we use our model for CelebA, while we can successfully Gaussianize the latent code, we have poor reconstruction. We believe that this is an issue of model capacity or embedding dimension. Increasing both, should resolve this issue, and we leave it for future work.  









\begin{figure*}[t]
  \centering

  \begin{subfigure}[t]{\textwidth}
    \centering
    \begin{subfigure}[t]{0.32\textwidth}
      \centering
      \includegraphics[width=\linewidth]{figs/hist-epoch-tau-1-mml2000-test.png}
      \caption{Histogram of Entries}
      \label{fig:tik1-hist}
    \end{subfigure}\hfill
    \begin{subfigure}[t]{0.32\textwidth}
      \centering
      \includegraphics[width=\linewidth]{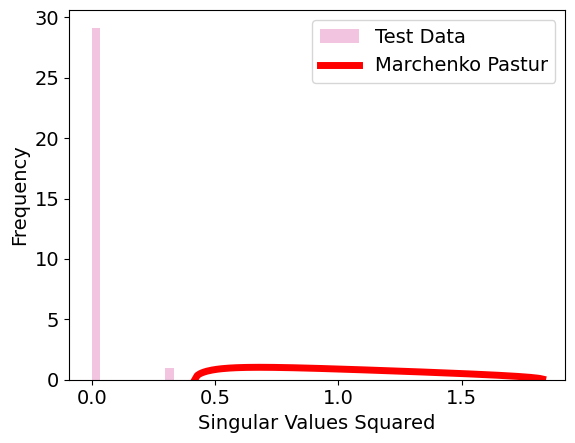}
      \caption{Histogram for Eigenvalue}
      \label{fig:tik1-eigs}
    \end{subfigure}\hfill
    \begin{subfigure}[t]{0.32\textwidth}
      \centering
      \includegraphics[width=\linewidth]{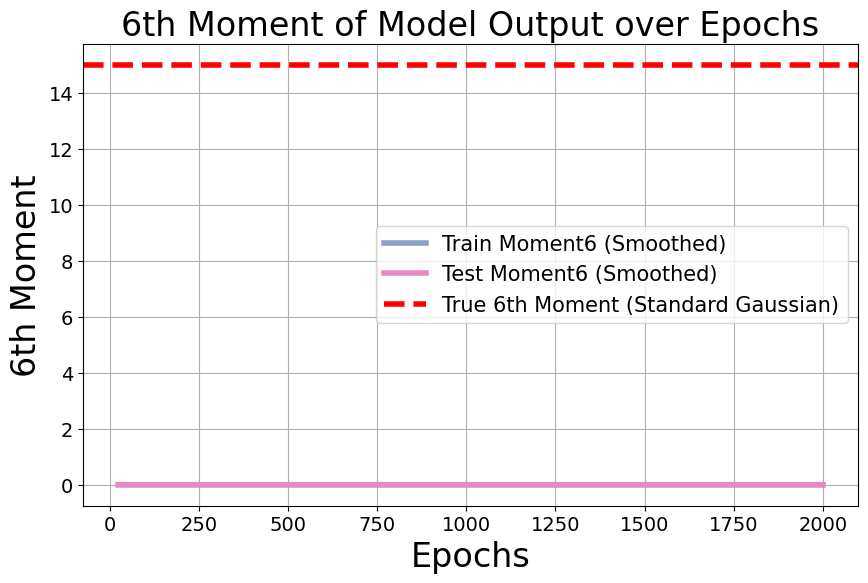}
      \caption{6th Moment}
      \label{fig:tik1-m6}
    \end{subfigure}
    \caption{Gaussianization statistics for the Tikhonov regularized autoencoder with $\tau=1$.}
    \label{fig:tik1-row}
  \end{subfigure}

  \vspace{0.5em}

  \begin{subfigure}[t]{\textwidth}
    \centering
    \begin{subfigure}[t]{0.32\textwidth}
      \centering
      \includegraphics[width=\linewidth]{figs/hist-epoch-tau-0-autoencoder2000-test.png}
      \caption{Histogram of Entries}
      \label{fig:tau0-hist}
    \end{subfigure}\hfill
    \begin{subfigure}[t]{0.32\textwidth}
      \centering
      \includegraphics[width=\linewidth]{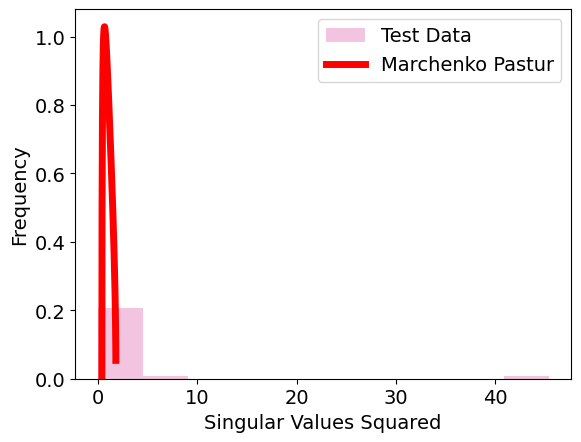}
      \caption{Histogram for Eigenvalue}
      \label{fig:tau0-eigs}
    \end{subfigure}\hfill
    \begin{subfigure}[t]{0.32\textwidth}
      \centering
      \includegraphics[width=\linewidth]{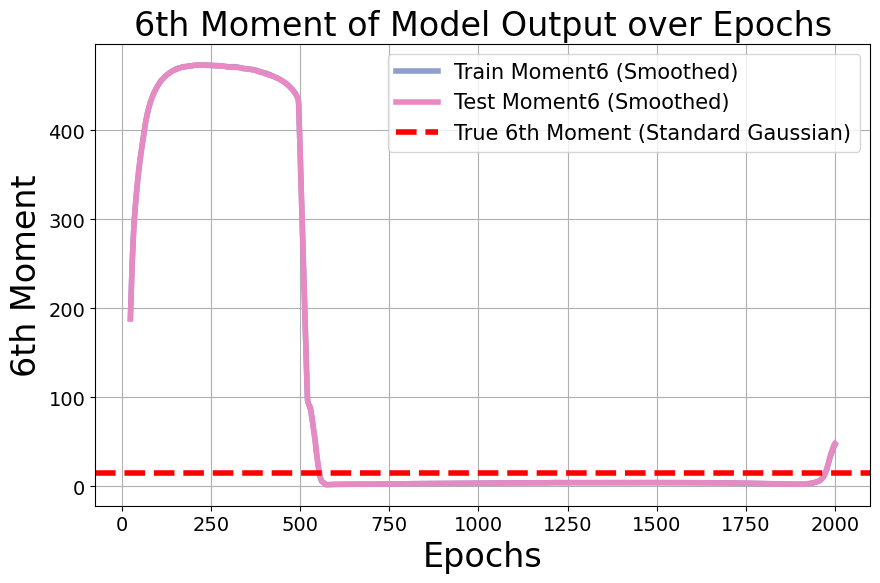}
      \caption{6th Moment}
      \label{fig:tau0-m6}
    \end{subfigure}
    \caption{Gaussianization statistics for the unregularized autoencoder ($\tau=0$).}
    \label{fig:tau0-row}
  \end{subfigure}

  \vspace{0.5em}

  \begin{subfigure}[t]{\textwidth}
    \centering
    \begin{subfigure}[t]{0.32\textwidth}
      \centering
      \includegraphics[width=\linewidth]{figs/hist-epoch-tau-1-autorencoder2000-test.png}
      \caption{Histogram of Entries}
      \label{fig:free1-hist}
    \end{subfigure}\hfill
    \begin{subfigure}[t]{0.32\textwidth}
      \centering
      \includegraphics[width=\linewidth]{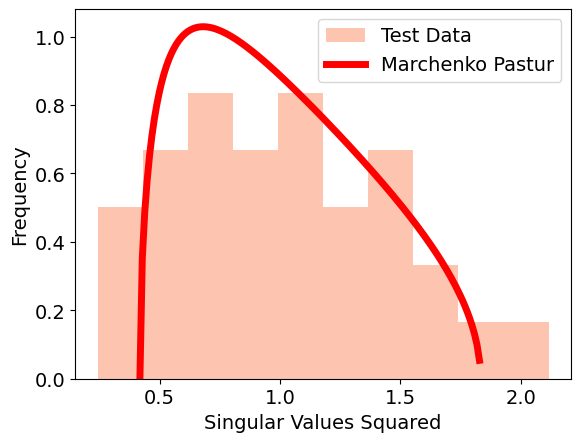}
      \caption{Histogram for Eigenvalue}
      \label{fig:free1-eigs}
    \end{subfigure}\hfill
    \begin{subfigure}[t]{0.32\textwidth}
      \centering
      \includegraphics[width=\linewidth]{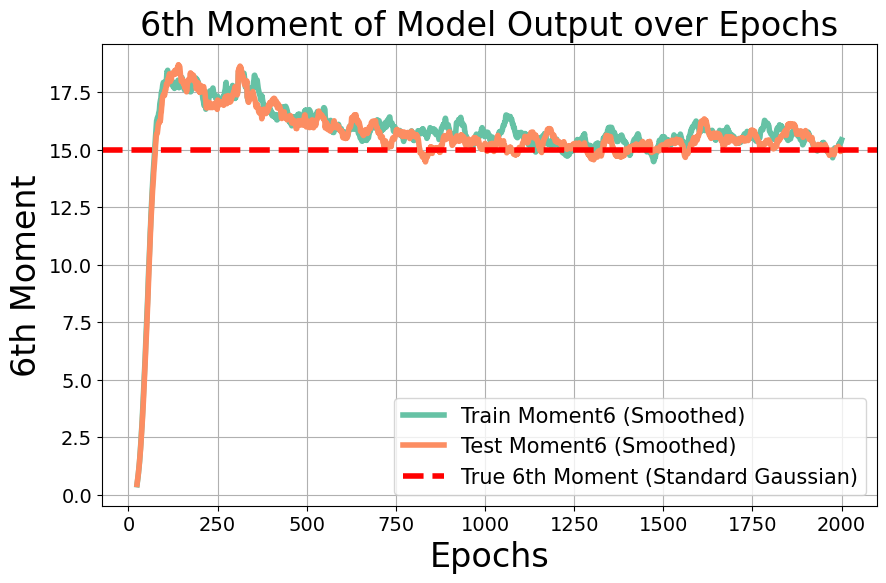}
      \caption{6th Moment}
      \label{fig:free1-m6}
    \end{subfigure}
    \caption{Gaussianization statistics for the Free loss regularized autoencoder with $\tau=1$.}
    \label{fig:free1-row}
  \end{subfigure}

  \caption{Gaussianization statistics comparing the unregularized autoencoder, the Tikhonov-regularized autoencoder, and the Free-loss regularized autoencoder. See Section~\ref{sec:E1} for more details}
  \label{fig:autoencoder}
\end{figure*}

\MakeAEEvolveFig{auto}{MNIST}{MNIST}{10}{$c=32/128$}
\MakeAEEvolveFig{auto}{CIFAR}{CIFAR}{10}{$c=32/128$}

\end{document}